\theoremstyle{plain} 
  \newtheorem{thm}{Theorem}[section]
  \newtheorem{prop}[thm]{Proposition}
  \newtheorem{lem}[thm]{Lemma}
\theoremstyle{definition} 
\theoremstyle{remark} 
\newcommand{\bfx}{\mathbf{x}}
\newcommand{\bfy}{\mathbf{y}}
\newcommand{\mbR}{\mathbb{R}}
\newcommand{\mcN}{\mathcal{N}}
\newcommand{\mcR}{\mathcal{R}}
\newcommand{\mcM}{\mathcal{M}}
\newcommand{\mcV}{\mathcal{V}}
\newcommand{\mcE}{\mathcal{E}}
\newcommand{\mcG}{\mathcal{G}}
\newcommand{\tild}{\tilde{d}}
\newcommand{\bfv}{\mathbf{v}}
\newcommand{\bfe}{\mathbf{e}}
\newcommand{\bfzr}{\mathbf{0}}
\definecolor{blue}{rgb}{0,0,0.9}
\definecolor{red}{rgb}{0.9,0,0}
\title{Formation-Controlled Dimensionality Reduction}
\author[1]{Taeuk Jeong}
\affil[1]{Department of Computational Science and Engineering\\ \newline
Yonsei University\\ ({\tt iamlogin@yonsei.ac.kr})}
\author[2]{Yoon Mo Jung}
\affil[2]{Department of Mathematics\\ \newline
Sungkyunkwan University\\ ({\tt ymjung@skku.edu})}
\author[3]{Euntack Lee}
\affil[3]{Department of Mathematics\\ \newline
Sungkyunkwan University\\ ({\tt etlee@skku.edu})}
\begin{document}
\maketitle

\begin{abstract}
    Dimensionality reduction represents the process of generating a low dimensional representation of high dimensional data. Motivated by the formation control of mobile agents, we propose a nonlinear dynamical system for dimensionality reduction. The system consists of two parts; the control of neighbor points, addressing local structures, and the control of remote points, accounting for global structures.We also include a brief mathematical analysis of the model and its numerical procedure. Numerical experiments are performed on both synthetic and real datasets and comparisons with existing models demonstrate the soundness and effectiveness of the proposed model.

\smallskip
\noindent \textbf{Keywords.} Dimensionality reduction, Manifold Learning, Formation Control, Dynamical System

\end{abstract}

\tableofcontents

\section{Introduction}
Dimensionality reduction represents the process of extracting low dimensional structure from high dimensional data. High dimensional data include multimedia databases, gene expression microarrays, and financial time series, for example. In order to deal with such real-world data properly, it is better to reduce its dimensionality to avoid undesired properties of high dimensions such as the curse of dimensionality~\cite{vandermaaten2009, SaulWeinberger06}. As a result, classification, visualization, and compression of data can be expedited, for example~\cite{vandermaaten2009}.

In many problems, it is presumed that the dimensionality of the measured data is only artificially high; the measured data are high-dimensional but data nearly have a lower-dimensional structure, since they are multiple, indirect measurements of an underlying factors, which typically cannot be directly calibrated~\cite{Ghodsi2006}. Dimensionality reduction is the transformation of such data into a meaningful representation of reduced dimensionality. Ideally, the reduced representation should have a dimensionality that corresponds to the intrinsic dimensionality of the data~\cite{vandermaaten2009, Ghodsi2006}.

Traditionally, linear methods were employed for dimensionality reduction, such as principal components analysis, multidimensional scaling~\cite{Cunningham15}. To overcome the limitation of these methods, diverse nonlinear methods have been introduced in the last two decades, Laplacian Eigenmap, Isomap, Locally Linear Embedding, kernel PCA, Non-negative Matrix Factorization, just to name a few. For a comprehensive overview and classification of dimensionality reduction techniques, we refer the interested readers to the review papers~\cite{vandermaaten2009, SaulWeinberger06, Ghodsi2006, Sorzano2014, Cunningham15}.

In this paper, we present a different perspective-formation control~\cite{Ahn20, OhParkAhn15, Olfati-Saber06, Sun18}. Motivated by the formations of mobile agents under interagent distance control, we regard the dimensionality reduction process as interaction between many bodies, moving toward a desired formation by keeping local distances. This approach offers a fresh insight and vision into existing methods.

This paper is organized as follows. In Section 2, we review related models and provide motivation for the proposed model. Section 3 introduces the new model, employing a nonlinear dynamical system. Section 4 briefly studies mathematical properties of the model. In Section 5, we provide a computational scheme and numerical experiments. Finally, Section 6 concludes the paper.

\section{Related Models}
Let $\bfx_i$,  $i = 1, \ldots, n$ be $n$ \emph{high dimensional} data points in $\mbR^D$. We assume that those points are mathematically structured, say lying on a Riemannian manifold $\mcM$ embedded in $\mbR^D$, possibly perturbed by random noise in the ambient space $\mbR^D$. Considering the dimensionality $d$ of $\mcM$, also called \emph{intrinsic dimensionality}, we look for a $d$-dimensional representation $\bfy_i$, $i = 1, \ldots, n$ of those data. It is crucial to preserve some geometric structures or innate properties of the input data $\bfx_i$ in $\bfy_i$. In general, adjacent inputs needed to be mapped to adjacent outputs, while distant inputs are mapped to distant outputs. Denoting the geodesic distance of $\mcM$ by $d_\mcM(\cdot,\cdot)$, we require
\begin{equation*}
  d_\mcM(\bfx_i, \bfx_j) = \Vert \bfy_i - \bfy_j \Vert\quad\text{or}\quad d_\mcM(\bfx_i, \bfx_j) \approx \Vert \bfy_i - \bfy_j \Vert  \quad
\end{equation*}
for any  $i,\,j$. Here, $\Vert \cdot \Vert$ denotes the Euclidean distance.

With the notation $d_{ij} := d_\mcM(\bfx_i,\bfx_j)$, a classical dimensional reduction method, metric multidimensional scaling (mMDS) seeks
\begin{equation*}
  \min_{\bfy_1,\ldots,\bfy_n} \sum_{i < j} \big(d_{ij} - \Vert \bfy_i - \bfy_j \Vert\big)^2.
\end{equation*}
There are also various models related to this formulation, such as Principal Component Analysis (PCA), Isomap, Kernel PCA, Maximum Variance Unfolding, diffusion maps, etc. These models usually apply convex optimization techniques such as eigenvalue problem~\cite{vandermaaten2009}.

The primary task in formation control is to achieve a predetermined spatial configuration to a team of agents~\cite{Ahn20}. In distance-based formation control~\cite{OhParkAhn15, OhAhn14}, considering a given graph structure $\mcG = (\mcV, \mcE)$ with the vertex set $\mcV$ and the edge set $\mcE$, and a given \emph{realization} $p^* = (p^*_1,\ldots, p^*_n)\in \mbR^{nd}$, the \emph{desired formation} $E_p^*$ of the agents is defined as the set of formations that are \emph{congruent} to $p^*$:
\begin{equation*}
  E_{p^*} = \{(p_1,\ldots, p_n) \in \mbR^{nd}: \Vert p_i - p_j \Vert = \Vert p^*_i - p^*_j \Vert, \;i, j \in \mcV\}.
\end{equation*}
This set can be adjusted to the given graph structure:
\begin{equation*}
  E'_{p^*} = \{(p_1,\ldots, p_n) \in \mbR^{nd}: \Vert p_i - p_j \Vert = \Vert p^*_i - p^*_j \Vert, \;(i, j) \in \mcE\}.
\end{equation*}
In this case, a formation in this set is said to be \emph{equivalent} to $(\mcG, p^*)$.

To achieve a desired formation, gradient control laws have been widely employed~\cite{OhParkAhn15}. For the agent $i$, a local potential function can be defined as
\begin{equation*}
  \phi_i(p_i) = \frac{k_p}{2}\sum_{j\in\mcN_i} \gamma(\Vert p_i - p_j\Vert),
\end{equation*}
where $\mcN_i$ is the index set of neighbor agents of $p_i$, $k_p > 0$ and $\gamma: \mbR \rightarrow  \bar{\mbR}_+$ is a differentiable function. The potential $\gamma$ can be chosen as~\cite{KrickBrouckeFrancis09}:
\begin{equation*}
  \gamma(\Vert p_i - p_j\Vert) = \big(\Vert p_i - p_j\Vert^2 - \Vert p^*_i - p^*_j\Vert^2\big)^2.
\end{equation*}
To minimize the potential $\phi_i$, the gradient flow can be applied. The position dynamics of the agents, called the \emph{single-integrator} model is given as~\cite{OhParkAhn15, Ahn20}:
\begin{equation*}
  \dot{p_i} = -\nabla_{p_i}\phi_i, \; i=1,\ldots,n.
\end{equation*}

By regarding $\bfx_i$ and $\bfy_i$ as $p^*_i$ and $p_i$, respectively, we notice that the classical multidimensional scaling and the single-integrator model of the distance-based formation control seek a similar goal. The major difference lies in dimensionality. Nevertheless, motivated by formation control, we develop a nonlinear dynamical system for dimensionality reduction, which is given in the next section.

\section{Proposed Model}
We consider the following objective as a potential for dimensionality reduction:
\begin{equation}\label{model:local}
  \phi(\bfy_1,\ldots,\bfy_n) = \frac{1}{2}\sum_{i=1}^n \phi_i = \frac{1}{2}\sum_{i=1}^n\frac{1}{2}\sum_{j\in\mcN_i} \big|d_\mcM(\bfx_i,\bfx_j)^p - \Vert \bfy_i - \bfy_j\Vert^p\big|^q
\end{equation}
where $\mcN_i$ is the index set of the \emph{neighbor} points of $\bfx_i$ excluding $i$. The set $\mcN_i$ may consist of the $k$-nearest points or the points $\bfx_j$ in the $\epsilon$-neighborhood of $\bfx_i$, under the geodesic distance. We choose $p=1$ and $q=2$, unless otherwise specified. In this case, the selection of the exponent $p$ is to preserve the closer neighbor points more strongly, and the choice of $q$ is for an easy computation.

The difference from mMDS is that mMDS considers all pairwise distances, but the proposed potential takes only neighbors into account. Formation control seeks the equality in pairwise distances, but our case looks for approximates by finding a minimizer of the potential. To find it, similar to the case of formation control, we apply the gradient flow of $\phi_i$:
\begin{equation}\label{model:gradflow}
  \frac{d\bfy_i}{dt}  = -\nabla_{\bfy_i}\phi_i =  \sum_{j \in \mcN_i} \big(d_\mcM(\bfx_i, \bfx_j) - \Vert \bfy_i - \bfy_j \Vert\big)\frac{\bfy_i - \bfy_j}{\Vert \bfy_i - \bfy_j \Vert}, \; i=1,\ldots,n.
\end{equation}

We remark that if $\mcN_i$ is symmetric, i.e. $j\in\mcN_i$ implies $i\in\mcN_j$, for example, the points $\bfx_j$ in the $\epsilon$-neighborhood of $\bfx_i$, $\nabla_{\bfy_i}\phi_i = \nabla_{\bfy_i}\phi$, so that the equation \eqref{model:gradflow} consists of the gradient flow of \eqref{model:local}. Considering convergence, even in the case of the formation control, for $p$ to converge to $E'_{p^*}$ asymptotically, it is necessary that the graph $\mcG$ is a tree and any agents are not collocated initially~\cite{DimarogonasJohansson10, OhParkAhn15}. For example, let $3$ points $\bfx_1, \bfx_2, \bfx_3$ be sequentially located in a line with distance $1$, such as $(0, 0), (1, 0), (2, 0)$ in $\mbR^2$, respectively. With $\mcN_1 = \{2\},\, \mcN_2 = \{1, 3\}, \,\mcN_3 = \{2\}$, if $\bfy_1,\, \bfy_2,\, \bfy_3$ are initially given as $0,1,0$, respectively in $\mbR$, they form an equilibrium of \eqref{model:gradflow}. Thus, to achieve a desired formation, a proper initial guess and an appropriate graph structure is required.

Since the dynamical system~\eqref{model:gradflow} is controlled by the pairwise distances in neighbors i.e. \emph{local geometry}, the \emph{full geometry} may not be recovered. This may happen if the initial guess is far away from a desired formation, such as a random initial guess. Nonetheless, the reason why not enforcing all the pairwise distances, i.e. the full geometry is as follows. First, if $\bfx_j$ is not close enough to $\bfx_i$, the evaluation of $d_\mcM(\bfx_i, \bfx_j)$ is difficult. We note that if the point $\bfx_j$ is sufficiently close, $d_\mcM(\bfx_i, \bfx_j)$ can be well-approximated by $\Vert \bfx_i - \bfx_j \Vert$, on the contrary. Second, it is computationally cumbersome, especially on big data.

To circumvent the insufficiency of the geometry by neighbor points, we need to provide clues for global geometry. To emulate the full geometry more precisely, we add a non-local distance term in \eqref{model:gradflow}:
For $i=1, \ldots, n$ and $t> 0$,
\begin{equation}\label{model:full}
\begin{split}
\frac{d\bfy_i}{dt} = &\sum_{j \in \mcN_i} \big(d_\mcM(\bfx_i, \bfx_j) - \Vert \bfy_i - \bfy_j \Vert\big)\frac{\bfy_i - \bfy_j}{\Vert \bfy_i - \bfy_j \Vert} \\
& \quad + \lambda_t\sum_{j \in \mcR_i} \big(\tild_\mcM(\bfx_i, \bfx_j)-\Vert \bfy_i - \bfy_j \Vert \big)_+\frac{\bfy_i - \bfy_j}{\Vert \bfy_i - \bfy_j   \Vert}
\end{split}
\end{equation}
where $\lambda_t > 0$. We explain the second term in detail. The index set $\mcR_i$ is a subset of the complement of $\mcN_i \cup \{i\}$, i.e. $\mcR_i \subseteq \{1,\ldots,n\}\setminus (\mcN_i \cup \{i\})$ and $\tild_\mcM(\bfx, \bfy)$ is a lower bound or an approximate to the geodesic distance between two points $\bfx, \bfy$ in $\mcM$. Here, $x_+$ denotes $\max(0, x)$. We call $\mcR_i$ the set of \emph{remote} points of $\bfx_i$, and it can be a random subset of $\{1,\ldots,n\}\setminus (\mcN_i \cup \{i\})$. For $\tild_\mcM(\bfx, \bfy)$, the Euclidean distance $\Vert \bfx_i - \bfx_j \Vert$ or the length of shortest path in the graph by the neighbor points can be adopted.

Roughly speaking, the first term moves the point $\bfy_i$ relative to the neighbor points $\bfy_j$ to meet $\Vert \bfy_i - \bfy_j \Vert = d_\mcM(\bfx_i, \bfx_j)$ and the second term moves the point $\bfy_i$ away from the remote points $\bfy_j$ until $\Vert \bfy_i - \bfy_j \Vert \ge \tild_\mcM(\bfx_i, \bfx_j)$. Hence the first term governs the local geometry acting as configuration force, and the second term controls the global geometry acting as repulsive force. However, the the second term involves imprecise knowledge, so we may choose $\lambda_t \rightarrow 0$ as $t\rightarrow \infty$.

\section{Mathematical Properties of the Model}

In this section, we study some basic mathematical properties of our model. To shorten and clarify computations, we simplify $d_\mcM(\bfx_i, \bfx_j)$ and $\tild_\mcM(\bfx_i,\bfx_j)$ by $d_{ij}$ and $\tild_{ij}$, respectively. In addition, to bypass non-differentiability in \eqref{model:full}, we analyze a mollified version of the equation \eqref{model:full}:
\begin{equation}\label{model:mollified}
    \frac{d\bfy_i}{dt} = \sum_{j \in \mcN_i} (d_{ij} - \Vert \bfy_i - \bfy_j \Vert)\frac{\bfy_i - \bfy_j}{\Vert \bfy_i - \bfy_j \Vert} + \lambda\sum_{j \in \mcR_i} f_\varepsilon(\tild_{ij}-\Vert \bfy_i - \bfy_j \Vert)\frac{\bfy_i - \bfy_j}{\Vert \bfy_i - \bfy_j   \Vert}
\end{equation}
where $f_{\varepsilon}:\mathbb{R}\to\mathbb{R}$
is a nonnegative $C^{\infty}$-mollifier of the function $f(x)= x_+ = \max(0, x)$
such that
\[
\begin{cases}
f_{\varepsilon}(x)=f(x), & \left|x\right|>\varepsilon,\\
f(x)\leq f_{\varepsilon}(x)\leq\frac{1}{2}\left(x+\varepsilon\right) & \left|x\right|\leq\varepsilon.
\end{cases}
\]
We also restrict $0< \lambda \le 1$ to avoid an uninteresting constant for upper bound.

We assume that $\mcN_i$ and $\mcR_i$ are symmetric, $j\in\mcN_i$ implies $i\in\mcN_j$, for example.
With an initial condition $\bfy_i(0) = \bfy_{i}^{\text{in}}$, we have the following invariant property.

\begin{lem}\label{lemma:invarient}
  Let $\bfy_i, i = 1, \ldots, n$ be a solution to \eqref{model:mollified}.
  Then the model \eqref{model:mollified} is invariant under rigid motion. More precisely, for
  \[
  \tilde{\bfy}_i:=\Omega\bfy_i+\mathbf{c}
  \]
  for some $\Omega\in O(d)$ and $\mathbf{c}\in\mathbb{R}^{d}$,
  we have
  \begin{equation} \label{B-2-1}
  \begin{aligned}\frac{d\tilde{\bfy}_i}{dt} & =\sum_{j\in\mathcal{N}_{i}}\left(d_{ij}-\left\Vert \tilde{\bfy}_i-\tilde{\bfy}_j\right\Vert \right)\frac{\tilde{\bfy}_i-\tilde{\bfy}_j}{\left\Vert \tilde{\bfy}_i-\tilde{\bfy}_j\right\Vert }\\
   & \qquad+\lambda\sum_{j\in\mathcal{R}_{i}}f_{\varepsilon}\left(\tild_{ij}-\left\Vert \tilde{\bfy}_i-\tilde{\bfy}_j\right\Vert \right)\frac{\tilde{\bfy}_i-\tilde{\bfy}_j}{\left\Vert \tilde{\bfy}_i-\tilde{\bfy}_j\right\Vert }.
  \end{aligned}
  \end{equation}
  Furthermore, if $\bfy_i^{\text{in}}, \,i = 1, \ldots, n$ satisfy
  \begin{equation}\label{B-2-2}
    \bfy_{c}^{\text{in}}=\frac{1}{n}\left(\bfy_{1}^{\text{in}}+\cdots+\bfy_{n}^{\text{in}}\right)=\mathbf{0},
  \end{equation}
  then we have
  \[
    \bfy_{c}=\frac{1}{n}\left(\bfy_{1}+\cdots+\bfy_{n}\right)=\mathbf{0}.
  \]

  \end{lem}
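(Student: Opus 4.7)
The plan is to verify both claims by direct substitution, using two observations: orthogonal matrices preserve inner products and hence norms, and the sum of the right-hand side vanishes once pairs are grouped via the symmetry of $\mcN_i$ and $\mcR_i$.

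For the rigid motion invariance, I would first note that constant translations cancel in every difference, so $\tilde{\bfy}_i - \tilde{\bfy}_j = \Omega(\bfy_i - \bfy_j)$. Since $\Omega \in O(d)$, this yields $\|\tilde{\bfy}_i - \tilde{\bfy}_j\| = \|\bfy_i - \bfy_j\|$, and therefore every unit vector on the right-hand side transforms as
\[
\frac{\tilde{\bfy}_i - \tilde{\bfy}_j}{\|\tilde{\bfy}_i - \tilde{\bfy}_j\|} = \Omega\,\frac{\bfy_i - \bfy_j}{\|\bfy_i - \bfy_j\|},
\]
while every scalar coefficient $d_{ij} - \|\bfy_i - \bfy_j\|$ and $f_\varepsilon(\tild_{ij} - \|\bfy_i - \bfy_j\|)$ is unchanged. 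On the left-hand side $\frac{d\tilde{\bfy}_i}{dt} = \Omega\,\frac{d\bfy_i}{dt}$ since $\mathbf{c}$ is time-independent. Applying $\Omega$ to \eqref{model:mollified} and matching term by term gives \eqref{B-2-1}.

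For the center-of-mass statement, it suffices to show $\sum_{i=1}^n \frac{d\bfy_i}{dt} = \mathbf{0}$, since then $\dot{\bfy}_c = \mathbf{0}$ and the conclusion follows from \eqref{B-2-2}. Summing \eqref{model:mollified} over $i$ produces a double sum over pairs $(i,j)$ with $j \in \mcN_i$ (respectively $j \in \mcR_i$). By the assumed symmetry of $\mcN_i$ and $\mcR_i$, for each such ordered pair the reversed pair $(j,i)$ also appears in the sum. Since $d_{ij} = d_{ji}$, $\tild_{ij} = \tild_{ji}$ and $\|\bfy_i - \bfy_j\| = \|\bfy_j - \bfy_i\|$, the two contributions share the same scalar coefficient but have opposite direction vectors $\pm(\bfy_i - \bfy_j)/\|\bfy_i - \bfy_j\|$, so they cancel pairwise.

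The calculations are essentially bookkeeping; the only real point to watch is that the symmetry of both index sets $\mcN_i$ and $\mcR_i$ is essential for the pairwise cancellation in the second part, and that the mollifier $f_\varepsilon$ keeps a symmetric scalar coefficient $f_\varepsilon(\tild_{ij} - \|\bfy_i - \bfy_j\|)$ under the swap $i \leftrightarrow j$, so the argument applies verbatim to both the local and the mollified non-local term.
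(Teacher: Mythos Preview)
Your proposal is correct and follows essentially the same approach as the paper: for rigid-motion invariance the paper simply substitutes $\bfy_i=\Omega^{t}\tilde{\bfy}_i-\Omega^{t}\mathbf{c}$ into \eqref{model:mollified}, which is the inverse-direction version of your argument that applies $\Omega$ to both sides; for the center-of-mass conservation the paper likewise sums \eqref{model:mollified} over $i$ and cancels pairs using the symmetry of $\mcN_i$ and $\mcR_i$, exactly as you do.
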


  \begin{proof} We can obtain \eqref{B-2-1} by substituting
    \[
      \bfy_{i}=\Omega^{t}\tilde{\bfy}_i-\Omega^{t}\mathbf{c}
    \]
    in \eqref{model:mollified}.

    Now assume that \eqref{B-2-2} holds. Then
    \begin{equation}\label{B-2-4}
    \begin{aligned}\frac{d}{dt}\bfy_{c} & =\frac{1}{n}\left(\frac{d}{dt}\bfy_{1}+\cdots+\frac{d}{dt}\bfy_{n}\right)\\
     & =\frac{1}{n} \sum_{i=1}^n \sum_{j\in\mathcal{N}_{i}}\left(d_{ij}-\left\Vert \tilde{\bfy}_i-\tilde{\bfy}_j\right\Vert \right)\frac{\tilde{\bfy}_i-\tilde{\bfy}_j}{\left\Vert \tilde{\bfy}_i-\tilde{\bfy}_j\right\Vert }\\
   & \qquad+\frac{\lambda}{n} \sum_{i=1}^n \sum_{j\in\mathcal{R}_{i}}f_{\varepsilon}\left(\tild_{ij}-\left\Vert \tilde{\bfy}_i-\tilde{\bfy}_j\right\Vert \right)\frac{\tilde{\bfy}_i-\tilde{\bfy}_j}{\left\Vert \tilde{\bfy}_i-\tilde{\bfy}_j\right\Vert },
    \end{aligned}
    \end{equation}
    and since
    \[
    j\in\mathcal{N}_{i}\,\,\Longleftrightarrow\,\, i\in\mathcal{N}_{j}\quad\text{and}\quad j\in\mathcal{R}_{i}\,\,\Longleftrightarrow\,\, i\in\mathcal{R}_{j}
    \]
    hold, interchanging $i$ and $j$ in \eqref{B-2-4} gives
    \[
      \frac{d}{dt}\bfy_{1}+\cdots+\frac{d}{dt}\bfy_{n}=\mathbf{0}.
    \]
    \end{proof}

By Lemma \ref{lemma:invarient}, we assume that $\bfy_i, \,i = 1, \ldots, n$ are centered at origin:
\[
\frac{1}{n}\left(\bfy_{1}+\cdots+\bfy_{n}\right)\equiv\mathbf{0}.
\]
Next two lemmas show that the model \eqref{model:mollified} is a gradient flow. We use the following notation:
\[
\phi^\varepsilon_{i} :=\frac{1}{2}\sum_{j\in\mathcal{N}_{i}}\left(d_{ij} - \Vert \bfy_i - \bfy_j \Vert\right)^{2}+\lambda\sum_{j\in\mathcal{R}_{i}}F_{\varepsilon}\left(\tild_{ij}-\Vert \bfy_i - \bfy_j \Vert\right).
\]

\begin{lem} Let $\bfy_i, i = 1, \ldots, n$ be a solution to \eqref{model:mollified}. Then our model \eqref{model:mollified} is a gradient flow with a potential function
  \begin{equation*}
  \phi_\varepsilon(t, \bfy_1,\ldots,\bfy_n)=\frac{1}{2}\sum_{i=1}^n\left[\frac{1}{2}\sum_{j\in\mathcal{N}_{i}}\left(d_{ij} - \Vert \bfy_i - \bfy_j \Vert\right)^{2}+\lambda\sum_{j\in\mathcal{R}_{i}}F_{\varepsilon}\left(\tild_{ij}-\Vert \bfy_i - \bfy_j \Vert\right)\right],
  \end{equation*}
  where $F_{\varepsilon}$ be the antiderivative of $f_{\varepsilon}$
  satisfying $F_{\varepsilon}(0)=0$.
  \end{lem}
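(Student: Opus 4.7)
The plan is a direct computation verifying that the right-hand side of \eqref{model:mollified} equals $-\nabla_{\bfy_i}\phi_\varepsilon$ at each node $i$. The only nontrivial bookkeeping is the pair-counting that comes from the symmetry of $\mcN_i$ and $\mcR_i$; once that is handled, the rest is elementary differentiation of $\|\bfy_i-\bfy_j\|$ together with the fundamental theorem of calculus applied to $F_\varepsilon$.

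First I would rewrite the double sums defining $\phi_\varepsilon$ as sums over unordered pairs. Since $j\in\mcN_i\Leftrightarrow i\in\mcN_j$ and the summand $(d_{ij}-\|\bfy_i-\bfy_j\|)^2$ is symmetric in $(i,j)$, one has
\[
\frac{1}{4}\sum_{i=1}^{n}\sum_{j\in\mcN_i}(d_{ij}-\|\bfy_i-\bfy_j\|)^2=\frac{1}{2}\sum_{\{i,j\}:\,j\in\mcN_i}(d_{ij}-\|\bfy_i-\bfy_j\|)^2,
\]
and an analogous rewriting applies to the $\mcR_i$-sum with $F_\varepsilon$ in place of the squared difference.

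Second I would differentiate a single summand. Using $\nabla_{\bfy_k}\|\bfy_i-\bfy_j\|=\pm(\bfy_i-\bfy_j)/\|\bfy_i-\bfy_j\|$ when $k\in\{i,j\}$ (zero otherwise), together with the chain rule and $F_\varepsilon'=f_\varepsilon$, I get
\[
\nabla_{\bfy_k}(d_{ij}-\|\bfy_i-\bfy_j\|)^2=-2(d_{ij}-\|\bfy_i-\bfy_j\|)\frac{\bfy_k-\bfy_{\ell}}{\|\bfy_k-\bfy_{\ell}\|},
\]
for $\{k,\ell\}=\{i,j\}$, and similarly
\[
\nabla_{\bfy_k}F_\varepsilon(\tild_{ij}-\|\bfy_i-\bfy_j\|)=-f_\varepsilon(\tild_{ij}-\|\bfy_k-\bfy_\ell\|)\frac{\bfy_k-\bfy_\ell}{\|\bfy_k-\bfy_\ell\|}.
\]
In both formulas the unit vector points from $\bfy_k$ toward the partner, regardless of which role $k$ plays in the pair, which is the reason the two possible orderings give identical contributions.

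Third I would sum over all pairs containing a fixed index $k$. Each $j\in\mcN_k$ contributes the pair $\{k,j\}$ exactly once in the unordered-pair rewriting, so the factors $\tfrac12$ and $2$ cancel cleanly and I obtain
\[
-\nabla_{\bfy_k}\phi_\varepsilon=\sum_{j\in\mcN_k}(d_{kj}-\|\bfy_k-\bfy_j\|)\frac{\bfy_k-\bfy_j}{\|\bfy_k-\bfy_j\|}+\lambda\sum_{j\in\mcR_k}f_\varepsilon(\tild_{kj}-\|\bfy_k-\bfy_j\|)\frac{\bfy_k-\bfy_j}{\|\bfy_k-\bfy_j\|},
\]
which is exactly the right-hand side of \eqref{model:mollified}, proving that the flow is $\dot{\bfy}_k=-\nabla_{\bfy_k}\phi_\varepsilon$. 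The main (and only) obstacle is keeping the combinatorial factors straight; the symmetry of $\mcN_i$ and $\mcR_i$ (already assumed just before the lemma) and the factor $\tfrac12$ in front of the outer sum are precisely what is needed for the pair-counting to produce the single-indexed sums in \eqref{model:mollified}.
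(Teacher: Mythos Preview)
Your proposal is correct and is essentially the same direct-differentiation argument as the paper's proof: the paper computes $\nabla_{\bfy_i}\phi_j^\varepsilon$ case by case (for $j=i$, $j\in\mcN_i$, $j\in\mcR_i$, and otherwise) and then sums, whereas you first collapse the double sum into a sum over unordered pairs and differentiate each pair's contribution. Both routes rely on the standing symmetry assumption on $\mcN_i$ and $\mcR_i$ to make the factor $\tfrac12$ in front of $\sum_i$ cancel against the doubling from each pair appearing twice, and both arrive at $-\nabla_{\bfy_k}\phi_\varepsilon$ equal to the right-hand side of \eqref{model:mollified}.
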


  \begin{proof} We fix $i$ and compute $\nabla_{\bfy_{i}}\phi^\varepsilon_{j}$  for each $j$.

    For the case $i=j$, we have
    \begin{equation}\label{B-3-1}
      \nabla_{\bfy_{i}}\phi^\varepsilon_{i} = -\sum_{j \in \mcN_i} (d_{ij} - \Vert \bfy_i - \bfy_j \Vert)\frac{\bfy_i - \bfy_j}{\Vert \bfy_i - \bfy_j \Vert} - \lambda\sum_{j \in \mcR_i} f_{\varepsilon}(\tild_{ij}-\Vert \bfy_i - \bfy_j \Vert)\frac{\bfy_i - \bfy_j}{\Vert \bfy_i - \bfy_j   \Vert}.
  \end{equation}
  For $j\in\mathcal{N}_{i}$, we have
    \begin{equation}\label{B-3-2}
      \nabla_{\bfy_{i}}\phi^\varepsilon_{j}=\frac{1}{2}\nabla_{\bfy_{i}} \left(d_{ji} - \Vert \bfy_j - \bfy_i \Vert\right)^{2}=-\left(d_{ji} - \Vert \bfy_j - \bfy_i \Vert\right)\frac{\bfy_i - \bfy_j}{\Vert \bfy_i - \bfy_j \Vert},
    \end{equation}
 and for $j\in\mathcal{R}_{i}$,
    \begin{equation}\label{B-3-3}
      \nabla_{\bfy_{i}}\phi^\varepsilon_{j}=\nabla_{\bfy_{i}}F_{\varepsilon}\left(\tild_{ji}-\Vert \bfy_j - \bfy_i \Vert \right)=-f_{\varepsilon}\left(\tild_{ji}-\Vert \bfy_j - \bfy_i \Vert \right)\frac{\bfy_i - \bfy_j}{\Vert \bfy_i - \bfy_j \Vert}.
    \end{equation}
    Otherwise, we have
    \begin{equation}\label{B-3-4}
      \nabla_{\bfy_{i}}\phi^\varepsilon_{j} = \mathbf{0}.
    \end{equation}
    By combining \eqref{B-3-1}, \eqref{B-3-2}, \eqref{B-3-3} and \eqref{B-3-4},
    we conclude that

    \[
    -\nabla_{\bfy_{i}}\phi_\varepsilon=-2\nabla_{\bfy_{i}}\phi^\varepsilon_i=\frac{d}{dt}\bfy_{i}.
    \]
    \end{proof}


\begin{lem}\label{lem:decreasing}
  Let $\bfy_i, \,i = 1, \ldots, n$ be a solution to \eqref{model:mollified}. Then the potential $\phi_\varepsilon$ of our
  model \eqref{model:mollified} is nonincreasing with
  \[
  \frac{d}{dt}\phi_\varepsilon=-\sum_{i=1}^n\left\Vert \sum_{j\in\mathcal{N}_{i}}(d_{ij} - \Vert \bfy_i - \bfy_j \Vert)\frac{\bfy_i - \bfy_j}{\Vert \bfy_i - \bfy_j \Vert}+\lambda\sum_{j\in\mathcal{R}_{i}}f_{\varepsilon}(\tild_{ij}-\Vert \bfy_i - \bfy_j \Vert)\frac{\bfy_i - \bfy_j}{\Vert \bfy_i - \bfy_j \Vert}\right\Vert ^{2}\leq0.
  \]
\end{lem}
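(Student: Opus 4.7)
The plan is to recognize this as the standard Lyapunov dissipation identity for a gradient flow: once one has $\frac{d\bfy_i}{dt}=-\nabla_{\bfy_i}\phi_\varepsilon$ from the preceding lemma, the conclusion follows from a one-line chain-rule computation.

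First, I would differentiate $\phi_\varepsilon(\bfy_1(t),\ldots,\bfy_n(t))$ along a trajectory of \eqref{model:mollified}. By the chain rule,
\[
\frac{d}{dt}\phi_\varepsilon=\sum_{i=1}^{n}\nabla_{\bfy_{i}}\phi_\varepsilon\cdot\frac{d\bfy_{i}}{dt}.
\]
Then, invoking the gradient-flow identity $-\nabla_{\bfy_{i}}\phi_\varepsilon=\frac{d\bfy_{i}}{dt}$ established in the previous lemma, I substitute to obtain
\[
\frac{d}{dt}\phi_\varepsilon=-\sum_{i=1}^{n}\left\Vert\frac{d\bfy_{i}}{dt}\right\Vert^{2}\le 0.
\]
Finally, replacing $\frac{d\bfy_i}{dt}$ by the explicit right-hand side of \eqref{model:mollified} yields exactly the formula displayed in the statement, and nonpositivity is immediate because each summand is a squared norm.

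The main obstacle is essentially nil: this is the textbook dissipation identity for gradient systems, and no regularity or convexity input is needed beyond what the mollification $f_\varepsilon$ already provides. The only point worth checking is that $\phi_\varepsilon$ carries no explicit time dependence along the solution, so that the total $t$-derivative coincides with the chain-rule sum and no $\partial_t\phi_\varepsilon$ term intrudes. In \eqref{model:mollified} the parameters $\lambda$ and $\varepsilon$ are fixed constants, and the quantities $d_{ij}$, $\tild_{ij}$ depend only on the fixed input points $\bfx_i,\bfx_j$; hence the argument goes through verbatim. (By contrast, for the original model \eqref{model:full} with $\lambda_t$ time-varying, one would have to include an additional $\partial_t\phi_\varepsilon$ contribution, which is precisely why the analysis is carried out on the mollified, $\lambda$-constant version.)
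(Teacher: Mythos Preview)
Your proposal is correct and follows essentially the same approach as the paper: use the chain rule together with the gradient-flow identity from the preceding lemma to obtain $\frac{d}{dt}\phi_\varepsilon=-\sum_{i=1}^n\Vert\dot{\bfy}_i\Vert^2\le 0$, then substitute the explicit right-hand side of \eqref{model:mollified}. The paper's proof is in fact even terser, writing the computation as a single line in terms of the stacked vector $\bfy=(\bfy_1,\ldots,\bfy_n)\in\mbR^{nd}$.
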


\begin{proof}
  Since $\bfy_i$ are a gradient flow, we have
  \[
  \frac{d}{dt}\phi_\varepsilon=\nabla_\bfy\phi_\varepsilon\cdot\frac{d\bfy }{dt}= -\left\Vert \dot{\bfy}\right\Vert ^{2}=-\sum_{i = 1}^n\left\Vert \dot{\bfy}_{i}\right\Vert^2 .
  \]
  where
  \begin{equation}\label{collectey}
    \bfy = (\bfy_i, \ldots, \bfy_n)\in \mbR^{nd}.
  \end{equation}
\end{proof}

Assuming that $\mcN_i$ is not an empty set for $1\le i \le n$, we consider a undirected graph $\mathcal{G} = (\mathcal{V}, \mathcal{E})$ constructed by the local neighbor system of $\mcN_i$, more specifically, consisting of the vertex set $\mathcal{V} = \{i\in \mathbb{N}\ |\  1\le i \le n\}$ and the edge set $\mathcal{E} = \{(i, j)\ |\ j\in \mcN_i\}$, and assume that $\mathcal{G}$ is connected.

We also assume that there exist constants $d_m, \,d_M > 0$ such that
\[d_m \le d_{ij}, \ \tilde{d}_{ij} \  \text{ for }  \ j\in \mcN_i \cup \mcR_i \quad \text{and} \quad d_{ij} \le d_M \ \text{ for }\ j\in\mathcal{N}_{i}, \]
and use the notation
\[
\max_{1\le i \le n}\left|\mathcal{N}_{i}\right|:= n_\mcN,\quad\max_{1\le i \le n}\left|\mathcal{R}_{i}\right|:=n_\mcR,
\]
where $| \cdot |$ indicates the cardinality.

\begin{prop}\label{prop:uniformbound}
  Let $\bfy_i, i = 1, \ldots, n$ be a solution to \eqref{model:mollified}. Then the set $\{\bfy_i: i = 1, \ldots, n\} \subset \mbR^d$
  is uniformly bounded for all $t > 0$.
\end{prop}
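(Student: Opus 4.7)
The plan is to exploit the gradient-flow structure established in Lemma~\ref{lem:decreasing} to bound pairwise distances between \emph{neighboring} points, and then to propagate this bound to arbitrary pairs by combining the connectivity of $\mcG$ with the center-of-mass identity from Lemma~\ref{lemma:invarient}. Since $\phi_\varepsilon$ is nonincreasing along the flow, the key estimate is $\phi_\varepsilon(t) \le \phi_\varepsilon(0)$ for all $t > 0$, from which I aim to extract a uniform neighbor-distance bound.

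The first technical step is to show that $F_\varepsilon$ is bounded below, independently of $t$. Because $f_\varepsilon \ge 0$ everywhere and $f_\varepsilon = f = 0$ on $(-\infty,-\varepsilon]$, the antiderivative $F_\varepsilon$ is nondecreasing and constant on $(-\infty, -\varepsilon]$; the sandwich $0 \le f_\varepsilon(x) \le \tfrac{1}{2}(x+\varepsilon)$ on $[-\varepsilon,\varepsilon]$ then gives $F_\varepsilon \ge -\varepsilon^{2}/4$ on all of $\mbR$. Using this lower bound on the remote-pair contribution to $\phi_\varepsilon$ and rearranging $\phi_\varepsilon(t) \le \phi_\varepsilon(0)$, I would obtain
\[
\frac{1}{4}\sum_{i=1}^n \sum_{j \in \mcN_i}\bigl(d_{ij} - \Vert \bfy_i - \bfy_j \Vert\bigr)^{2} \le \phi_\varepsilon(0) + \frac{n \lambda n_\mcR \varepsilon^{2}}{8} =: C_{0}.
\]
Since every summand on the left is nonnegative, for each edge $(i,j) \in \mcE$ one has $|d_{ij} - \Vert \bfy_i - \bfy_j \Vert| \le 2\sqrt{C_0}$, so $\Vert \bfy_i - \bfy_j \Vert \le d_M + 2\sqrt{C_0}$ uniformly in $t$.

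For the propagation step, I would use that $\mcG$ is connected: any $i$ and $k$ are joined by a path of at most $n-1$ edges, and the triangle inequality yields $\Vert \bfy_i - \bfy_k \Vert \le (n-1)\bigl(d_M + 2\sqrt{C_0}\bigr)$ for every pair. Combining with the centering identity $\bfy_c \equiv \mathbf{0}$ from Lemma~\ref{lemma:invarient}, I write $\bfy_i = \tfrac{1}{n}\sum_{k=1}^{n}(\bfy_i - \bfy_k)$ and apply the triangle inequality once more to conclude $\Vert \bfy_i \Vert \le (n-1)\bigl(d_M + 2\sqrt{C_0}\bigr)$, a bound independent of $t$ and of $i$.

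The main subtlety to watch is that the mollified repulsion term $F_\varepsilon(\tild_{ij} - \Vert \bfy_i - \bfy_j \Vert)$ is not itself nonnegative — it can dip slightly below zero whenever a remote pair drifts far apart — so it cannot simply be discarded to isolate the neighbor sum. The uniform lower bound $F_\varepsilon \ge -\varepsilon^{2}/4$ is precisely what rescues the argument, and because it is $O(\varepsilon^{2})$ it does not degrade the final $t$-independent estimate.
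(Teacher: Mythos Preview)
Your argument is correct and follows essentially the same route as the paper: use $\phi_\varepsilon(t)\le\phi_\varepsilon(0)$ from Lemma~\ref{lem:decreasing} to bound neighbor distances $\|\bfy_i-\bfy_j\|$ by $d_M$ plus a constant, then propagate via connectivity of $\mcG$ and the centering $\bfy_c\equiv\mathbf{0}$ from Lemma~\ref{lemma:invarient}. Your explicit lower bound $F_\varepsilon\ge -\varepsilon^2/4$ is in fact a small refinement---the paper tacitly drops the remote terms as if $F_\varepsilon\ge 0$, whereas you correctly account for the slight negativity and absorb it into the constant $C_0$.
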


  \begin{proof}
  For $j\in\mathcal{N}_{i}$, $\left(d_{ij}-\left\Vert \bfy_{i}(t)-\bfy_{j}(t)\right\Vert \right)^{2}\leq\phi_\varepsilon(0)$ from $\phi_\varepsilon(t) \le \phi_\varepsilon(0)$, $t>0$,
  by Lemma \ref{lem:decreasing}. Thus,
\[ \left\Vert \bfy_{i}(t)-\bfy_{j}(t)\right\Vert \le d_{ij}+\phi_\varepsilon(0)^{1/2}\leq d_M+\phi_\varepsilon(0)^{1/2},\quad t>0. \]

  Since $\mathcal{G}$ is connected, for any $i$, $j$, there exists a sequence
  \[i=k_{1},\ldots,k_{l_{i,j}}=j,\]
  such that $k_{l}\in\mathcal{N}_{l+1}$.

  Finally, we have
  \begin{align*}
  \left\Vert \bfy_{i}\right\Vert  & \leq\left\Vert \bfy_{i}-\bfy_{c}\right\Vert +\left\Vert \bfy_{c}\right\Vert \leq\frac{1}{n}\sum_{j=1}^{n}\left\Vert \bfy_{i}-\bfy_{j}\right\Vert \leq\frac{1}{n}\sum_{j=1}^{n}\sum_{l=1}^{l_{i,j}-1}\left\Vert \bfy_{k_l}-\bfy_{k_{l+1}}\right\Vert \\
   & \leq\frac{1}{n}\sum_{j=1}^{n}n\left(d_M+\phi_\varepsilon(0)^{1/2}\right)=n\left(d_M+\phi_\varepsilon(0)^{1/2}\right).
  \end{align*}

\end{proof}

Since our system is a gradient flow, we can guarantee the system \eqref{model:mollified} is globally well-posed
under a proper initial condition.  Here we restate Picard-Lindel\"of theorem~\cite{Hartman02}.

\begin{lem} (Picard-Lindel\"of) Let $\boldsymbol{f},\bfy\in\mathbb{R}^{d}$;
$\boldsymbol{f}(t,\bfy)$ continuous on a region $R:=\left[t_{0},t_{0}+a\right]\times\left\{ \left\Vert \bfy-\bfy_{0}\right\Vert \leq b\right\} $
and uniformly Lipschitz continuous with respect to $\bfy$.
If $\left\Vert \boldsymbol{f}(t,\bfy)\right\Vert \leq M$
on $R$, then the system
\[
\bfy'=f(t,\bfy),\qquad\bfy(t_{0})=\bfy_{0}
\]
has a unique solution $\bfy=\bfy(t)$ on $[t_{0},t_{0}+\min(a,b/M)]$.
\end{lem}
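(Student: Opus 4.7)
The plan is to recast the initial value problem as a fixed-point equation on a suitable complete metric space and then apply the Banach fixed-point theorem. First I would convert the differential equation $\bfy' = \boldsymbol{f}(t, \bfy)$ with $\bfy(t_0) = \bfy_0$ into the equivalent integral form
\begin{equation*}
  \bfy(t) = \bfy_0 + \int_{t_0}^{t}\boldsymbol{f}(s, \bfy(s))\,ds,
\end{equation*}
using continuity of $\boldsymbol{f}$ to pass between the classical ODE formulation and this integral equation (the integral version automatically absorbs the initial condition).

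Next I would set $h := \min(a, b/M)$ and work on the closed ball $X := \{\bfy \in C([t_0, t_0 + h]; \mathbb{R}^d) : \Vert \bfy - \bfy_0\Vert_\infty \le b\}$, which is a complete metric space under the sup norm as a closed subset of the Banach space $C([t_0, t_0+h]; \mathbb{R}^d)$. Define the Picard operator $T$ on $X$ by $(T\bfy)(t) := \bfy_0 + \int_{t_0}^t \boldsymbol{f}(s, \bfy(s))\,ds$. The verification that $T$ maps $X$ into $X$ is direct: $T\bfy$ is continuous, and $\Vert T\bfy(t) - \bfy_0 \Vert \le \int_{t_0}^t M\,ds \le Mh \le b$ by the choice of $h$.

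The remaining task is to show that $T$ (or some iterate of it) is a contraction, and here lies the main obstacle: a naive application of the Lipschitz bound $\Vert \boldsymbol{f}(s,\bfy_1) - \boldsymbol{f}(s,\bfy_2)\Vert \le L\Vert \bfy_1 - \bfy_2\Vert$ only yields contraction when $Lh < 1$, which may fail for the interval length dictated by $\min(a, b/M)$. I would circumvent this in one of two standard ways. Either iterate and use the estimate $\Vert T^k\bfy_1 - T^k\bfy_2\Vert_\infty \le \frac{(Lh)^k}{k!}\Vert \bfy_1 - \bfy_2\Vert_\infty$, valid for all $k$ by a straightforward induction, which becomes a contraction once $k$ is large enough; or replace the sup norm by the Bielecki-type weighted norm $\Vert \bfy \Vert_w := \sup_{t \in [t_0,t_0+h]} e^{-2L(t-t_0)}\Vert \bfy(t)\Vert$, equivalent to the sup norm, under which $T$ is a contraction with constant $\tfrac{1}{2}$ in a single step. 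Either route produces, via Banach's theorem, a unique fixed point of $T$ in $X$; this fixed point is the unique solution $\bfy(t)$ on $[t_0, t_0+h]$.
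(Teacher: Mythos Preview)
Your proposal is correct and follows the standard Banach fixed-point argument for Picard--Lindel\"of: the integral reformulation, the self-map check on the closed ball via $Mh \le b$, and the contraction step handled either by the iterated estimate $(Lh)^k/k!$ or by a Bielecki weighted norm are all sound and yield existence and uniqueness on the full interval $[t_0, t_0 + \min(a, b/M)]$ without any smallness assumption on $Lh$.

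There is nothing to compare against, however: the paper does not supply its own proof of this lemma. It is stated as a classical result with a citation to Hartman's textbook and then invoked as a black box in the proof of Theorem~\ref{thm:well-posed}. Your write-up would serve perfectly well as a self-contained justification if one were desired, but the paper simply quotes the theorem.
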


\begin{thm}\label{thm:well-posed}
   For $\varepsilon<\varepsilon_{1}<\max\left\{\frac{\lambda d_m}{2\sqrt{(n_\mcN+n_\mcR)}},\; d_m\right\}$,
  assume that for $i = 1, \ldots, n$,
  \[
  \left|d_{ij}-\left\Vert \bfy_{i}^{\text{in}}-\bfy_{j}^{\text{in}}\right\Vert \right|<\varepsilon_{1},\ j\in\mathcal{N}_{i} \quad\text{and}\quad \Vert \bfy_i^{\text{in}} - \bfy_j^{\text{in}} \Vert - \tild_{ij} > \varepsilon_{1},\ j\in\mathcal{R}_{i}.
  \]
  Then, the system \eqref{model:mollified} is globally well-posed.
  \end{thm}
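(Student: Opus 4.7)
The plan is a continuation argument resting on the quoted Picard--Lindel\"of lemma. First I would observe that the right-hand side of \eqref{model:mollified}, viewed as a function of $\bfy$ in the notation of \eqref{collectey}, is $C^\infty$ on the open set $\Omega := \{\bfy \in \mbR^{nd} : \Vert\bfy_i - \bfy_j\Vert > 0 \text{ for every } j\in\mcN_i\cup\mcR_i\}$, since $f_\varepsilon$ is smooth and the only potentially singular factors are the unit vectors $(\bfy_i-\bfy_j)/\Vert\bfy_i-\bfy_j\Vert$. The hypothesized initial data, together with $\varepsilon_1<d_m$ from the condition on $\varepsilon_1$, place $\bfy^{\text{in}}$ in $\Omega$, so Picard--Lindel\"of produces a unique solution on a maximal interval $[0,T^*)$. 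If $T^*<\infty$, classical continuation theory forces either $\Vert\bfy(t)\Vert\to\infty$ or $\Vert\bfy_i(t)-\bfy_j(t)\Vert\to 0$ for some $j\in\mcN_i\cup\mcR_i$; Proposition \ref{prop:uniformbound} immediately rules out the former, since its bound depends only on $\phi_\varepsilon(0)$ and geometric constants.

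To eliminate collisions for a neighbor pair, I would combine Lemma \ref{lem:decreasing} with an initial estimate of $\phi_\varepsilon$. The neighbor hypothesis bounds each summand by $(d_{ij}-\Vert\bfy^{\text{in}}_i-\bfy^{\text{in}}_j\Vert)^2<\varepsilon_1^2$, while the remote hypothesis together with $\varepsilon<\varepsilon_1$ puts the initial argument of $F_\varepsilon$ strictly below $-\varepsilon$, where $|F_\varepsilon|\le\varepsilon^2/4$. Summing produces an explicit majorization $\phi_\varepsilon(0)\le C(n,n_\mcN,n_\mcR)\,\varepsilon_1^2$. Using $\phi_\varepsilon(t)\le\phi_\varepsilon(0)$ and the symmetry of $\mcN$, an individual neighbor summand satisfies $\tfrac{1}{2}\bigl(d_{ij}-\Vert\bfy_i(t)-\bfy_j(t)\Vert\bigr)^2 \le \phi_\varepsilon(0)+\tfrac{\lambda n n_\mcR\varepsilon^2}{8}\le \tilde C\varepsilon_1^2$, which gives $\Vert\bfy_i(t)-\bfy_j(t)\Vert\ge d_m-\sqrt{2\tilde C}\,\varepsilon_1$. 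The explicit threshold on $\varepsilon_1$ in the hypothesis is calibrated so that this lower bound is strictly positive, hence neighbor collisions are impossible on $[0,T^*)$.

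For a remote pair I would close the argument by a bootstrap exploiting the dormancy of the remote force: whenever $\Vert\bfy_i-\bfy_j\Vert-\tild_{ij}>\varepsilon$, the remote contribution to both \eqref{model:mollified} and $\phi_\varepsilon$ vanishes for that pair, so the dynamics reduces to the neighbor terms whose magnitudes are already $O(\varepsilon_1)$ by the previous step. A putative first exit time from the dormant regime would witness a gap closure exceeding $\varepsilon_1-\varepsilon$, which can be ruled out by combining the energy identity $\int_0^t\Vert\dot\bfy\Vert^2\,ds=\phi_\varepsilon(0)-\phi_\varepsilon(t)\le C\varepsilon_1^2$ with the stated threshold on $\varepsilon_1$. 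With both kinds of collision excluded, $T^*=\infty$ follows, contradicting the finiteness assumption and yielding global well-posedness. The main obstacle is precisely this remote-pair step: because $F_\varepsilon$ is bounded, the energy alone does not prevent a collision for $j\in\mcR_i$, and one must carefully chain together the dormancy of $f_\varepsilon$ on $(-\infty,-\varepsilon)$, the energy-controlled velocity bound, and the explicit threshold on $\varepsilon_1$ into a consistent quantitative estimate.
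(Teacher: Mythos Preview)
Your continuation framing and the neighbor-pair estimate are essentially the paper's argument, repackaged cleanly. The gap is in the remote-pair step, and it stems from a misreading of $F_\varepsilon$. You write that ``$F_\varepsilon$ is bounded, [so] the energy alone does not prevent a collision for $j\in\mcR_i$'', but this is false: $F_\varepsilon$ is bounded only on $(-\infty,-\varepsilon]$, where it equals the constant $F_\varepsilon(-\varepsilon)$. For $x>\varepsilon$ one has $f_\varepsilon(x)=x$, hence
\[
F_\varepsilon(x)=F_\varepsilon(\varepsilon)+\int_\varepsilon^x s\,ds\ \ge\ \tfrac{1}{2}\bigl(x^2-\varepsilon^2\bigr),
\]
so $F_\varepsilon$ is coercive on the positive side. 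If a remote pair were to approach collision, the argument $\tild_{ij}-\Vert\bfy_i-\bfy_j\Vert$ would exceed $\varepsilon$ (since $\tild_{ij}\ge d_m>\varepsilon_1>\varepsilon$), and the energy inequality $\lambda F_\varepsilon(\tild_{ij}-\Vert\bfy_i-\bfy_j\Vert)\le (|\mcN_i|+|\mcR_i|)\varepsilon_1^2$ then gives
\[
\Vert\bfy_i(t)-\bfy_j(t)\Vert\ \ge\ d_m-\sqrt{\tfrac{2(n_\mcN+n_\mcR)}{\lambda}}\,\varepsilon_1\ >\ 0,
\]
the last inequality being exactly what the threshold $\varepsilon_1<\frac{\lambda d_m}{2\sqrt{n_\mcN+n_\mcR}}$ (together with $\lambda\le 1$) is designed to ensure. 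This is precisely how the paper closes the argument, with no bootstrap needed.

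Your proposed bootstrap, by contrast, does not close. You aim to show that the dormant regime $\Vert\bfy_i-\bfy_j\Vert>\tild_{ij}+\varepsilon$ persists for all time, but the energy identity only controls $\int_0^t\Vert\dot\bfy\Vert^2\,ds$, not the arc length $\int_0^t\Vert\dot\bfy\Vert\,ds$; Cauchy--Schwarz introduces a factor $\sqrt{t}$, and the pointwise velocity bound from neighbor terms likewise yields a drift linear in $t$. Neither gives a uniform-in-time obstruction to the gap closing by $\varepsilon_1-\varepsilon$. More to the point, exit from dormancy occurs at $\Vert\bfy_i-\bfy_j\Vert=\tild_{ij}+\varepsilon\ge d_m+\varepsilon$, which is nowhere near a collision, so proving dormancy is both harder than needed and likely false in general. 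Replace this step with the coercivity argument above and your proof goes through.
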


  \begin{proof}
    First, we prove that $\left\Vert \bfy_{i}-\bfy_{j}\right\Vert $
    has a positive lower bound $\delta_{1}$ for $j\in \mcN_i \cup \mcR_i$.
    For each $i = 1, \ldots, n$, we know that
    \[
    \phi_{i}^\varepsilon(t)\leq\phi_{i}^\varepsilon(0),
    \]
    which implies
    \begin{equation}
    \begin{aligned} & \frac{1}{2}\sum_{j\in\mathcal{N}_{i}}\left(d_{ij}-\left\Vert \bfy_{i}(t)-\bfy_{j}(t)\right\Vert \right)^{2}+\lambda\sum_{j\in\mathcal{R}_{i}}F_{\varepsilon}\left(\tilde{d}_{ij}-\left\Vert \bfy_{i}(t)-\bfy_{j}(t)\right\Vert \right)\\
     & \qquad\leq\frac{1}{2}\sum_{j\in\mathcal{N}_{i}}\left(d_{ij}-\left\Vert \bfy_{i}^{\text{in}}-\bfy_{j}^{\text{in}}\right\Vert \right)^{2}+\lambda\sum_{j\in\mathcal{R}_{i}}F_{\varepsilon}\left(\tilde{d}_{ij}-\left\Vert \bfy_{i}^{\text{in}}-\bfy_{j}^{\text{in}}\right\Vert \right)\\
     & \qquad\leq\frac{1}{2}\left|\mathcal{N}_{i}\right|\varepsilon_{1}^{2}+\frac{\lambda}{2}\left|\mathcal{R}_{i}\right|\left(\varepsilon_{1}^{2}+\varepsilon^{2}\right)\leq\left(\left|\mathcal{N}_{i}\right|+\left|\mathcal{R}_{i}\right|\right)\varepsilon_{1}^{2}.
    \end{aligned}
    \label{C-2-1}
    \end{equation}
    Then, for each $j\in\mathcal{N}_{i}$, $\left(d_{ij}-\left\Vert \bfy_{i}(t)-\bfy_{j}(t)\right\Vert \right)^{2}\leq2\left(\left|\mathcal{N}_{i}\right|+\left|\mathcal{R}_{i}\right|\right)\varepsilon_{1}^{2}$ by \eqref{C-2-1}, and thus,
    \[0<(1-\lambda/\sqrt{2})d_m<d_{ij}-\sqrt{2\left(\left|\mathcal{N}_{i}\right|+\left|\mathcal{R}_{i}\right|\right)}\varepsilon_{1}\leq\left\Vert \bfy_{i}(t)-\bfy_{j}(t)\right\Vert .\]
    Also for $j\in\mathcal{R}_{i}$, \eqref{C-2-1} implies
    \begin{equation}
    \lambda F_{\varepsilon}\left(\tilde{d}_{ij}-\left\Vert \bfy_{i}(t)-\bfy_{j}(t)\right\Vert \right)\leq\left(\left|\mathcal{N}_{i}\right|+\left|\mathcal{R}_{i}\right|\right)\varepsilon_{1}^{2}.\label{C-2-2}
  \end{equation}

  If $\tilde{d}_{ij}-\left\Vert \bfy_{i}(t)-\bfy_{j}(t)\right\Vert >\varepsilon$
  in \eqref{C-2-2},  $\left(\tilde{d}_{ij}-\left\Vert \bfy_{i}(t)-\bfy_{j}(t)\right\Vert \right)^{2}\leq\frac{2\left(\left|\mathcal{N}_{i}\right|+\left|\mathcal{R}_{i}\right|\right)}{\lambda}\varepsilon_{1}^{2}$ by the definition of $F_{\varepsilon}$, and so,
  \[ 0<d_m-\sqrt{\frac{2\left(n_\mcN+n_\mcR\right)}{\lambda}}\varepsilon_{1}\le\tilde{d}_{ij}-\sqrt{\frac{2\left(\left|\mathcal{N}_{i}\right|+\left|\mathcal{R}_{i}\right|\right)}{\lambda}}\varepsilon_{1}\leq\left\Vert \bfy_{i}(t)-\bfy_{j}(t)\right\Vert. \]
  If $\tilde{d}_{ij}-\left\Vert \bfy_{i}(t)-\bfy_{j}(t)\right\Vert \leq\varepsilon$
  in \eqref{C-2-2}, we have
  \[
  0<d_m-\varepsilon_{1}\leq \tilde{d}_{ij}-\varepsilon\leq\left\Vert \bfy_{i}(t)-\bfy_{j}(t)\right\Vert.
  \]
  In any case, we have positive lower bound $\delta_{1}$ of $\left\Vert \bfy_{i}(t)-\bfy_{j}(t)\right\Vert $ for $j\in \mcN_i \cup \mcR_i$, regardless of $i$, $j$ and $t$.

  Next, we prove local Lipschitz-continuity of the right hand side of \eqref{model:mollified}. We consider two sets $S_{1}(\delta_{1})$ and $S_{2}$ by
  \[S_{1}(\delta_{1}):=\left\{\bfy \in \mbR^{nd}: \left\Vert \bfy_{i}-\bfy_{j}\right\Vert \geq\delta_{1},  \text{ for all } i, j  \text{ such that } j\in \mathcal{N}_{i}\cup\mathcal{R}_{i} \right\} \]
  and
  \[S_{2}:=\left\{\bfy \in \mbR^{nd}: \bfy_{i}=\bfy_{j}, \text{ for some } i, j  \text{ such that }  j\in \mathcal{N}_{i}\cup\mathcal{R}_{i}\right\}, \]
where $\bfy$ is defined in \eqref{collectey}. Note that our initial position $\bfy^{\text{in}}$ lies in $S_{1}(\delta_{1})$.

Let $\bfy^{1} \in S_{1}(\delta_{1})$ and $\bfy^{2} \in S_2$. We can choose some $i$, $j$ such that $\bfy_i^2 = \bfy_j^2$ and $j\in \mathcal{N}_{i}\cup\mathcal{R}_{i}$, which implies
\begin{equation*}
  \left\Vert \bfy_{i}^{1}-\bfy_{i}^{2}\right\Vert ^{2}+\left\Vert \bfy_{j}^{1}-\bfy_{j}^{2}\right\Vert ^{2}\geq\frac{1}{2}\left\Vert \bfy_{i}^{1}-\bfy_{j}^{1}\right\Vert^2 \geq\frac{\delta_{1}^2}{2},
\end{equation*}
where the first inequality is by applying the parallelogram law. Thus,
\[
\left\Vert \bfy^{1}-\bfy^{2}\right\Vert \ge \sqrt{\left\Vert \bfy_{i}^{1}-\bfy_{i}^{2}\right\Vert ^{2}+\left\Vert \bfy_{j}^{1}-\bfy_{j}^{2}\right\Vert ^{2}} \geq\frac{\delta_{1}}{\sqrt{2}}.
\]

For $\bfy^{1}\in S_{1}(\delta_{1})$ and $\bfy^{2}\in B_{\delta_{1}/\sqrt{2}}(\bfy^{1})$, the open ball centered at $\bfy^{1}$ with radius $\delta_{1}/\sqrt{2}$, we have
\begin{align*}
 & \left\Vert \frac{\bfy_{i}^{1}-\bfy_{j}^{1}}{\left\Vert \bfy_{i}^{1}-\bfy_{j}^{1}\right\Vert }-\frac{\bfy_{i}^{2}-\bfy_{j}^{2}}{\left\Vert \bfy_{i}^{2}-\bfy_{j}^{2}\right\Vert }\right\Vert \\
 & =\left\Vert \frac{\bfy_{i}^{1}-\bfy_{j}^{1}}{\left\Vert \bfy_{i}^{1}-\bfy_{j}^{1}\right\Vert }-\frac{\bfy_{i}^{2}-\bfy_{j}^{2}}{\left\Vert \bfy_{i}^{1}-\bfy_{j}^{1}\right\Vert }+\frac{\bfy_{i}^{2}-\bfy_{j}^{2}}{\left\Vert \bfy_{i}^{1}-\bfy_{j}^{1}\right\Vert }-\frac{\bfy_{i}^{2}-\bfy_{j}^{2}}{\left\Vert \bfy_{i}^{2}-\bfy_{j}^{2}\right\Vert }\right\Vert \\
 & \leq\left\Vert \frac{\left(\bfy_{i}^{1}-\bfy_{j}^{1}\right)-\left(\bfy_{i}^{2}-\bfy_{j}^{2}\right)}{\left\Vert \bfy_{i}^{1}-\bfy_{j}^{1}\right\Vert }\right\Vert +\left\Vert \frac{\bfy_{i}^{2}-\bfy_{j}^{2}}{\left\Vert \bfy_{i}^{1}-\bfy_{j}^{1}\right\Vert }-\frac{\bfy_{i}^{2}-\bfy_{j}^{2}}{\left\Vert \bfy_{i}^{2}-\bfy_{j}^{2}\right\Vert }\right\Vert \\
 & \leq\frac{2}{\delta_{1}}\left\Vert \bfy^{1}-\bfy^{2}\right\Vert +\frac{\left|\left\Vert \bfy_{i}^{1}-\bfy_{j}^{1}\right\Vert -\left\Vert \bfy_{i}^{2}-\bfy_{j}^{2}\right\Vert \right|}{\left\Vert \bfy_{i}^{1}-\bfy_{j}^{1}\right\Vert }\\
 & \leq\frac{2}{\delta_{1}}\left\Vert \bfy^{1}-\bfy^{2}\right\Vert +\frac{1}{\delta_{1}}\left\Vert (\bfy_{i}^{1}-\bfy_{j}^{1})-(\bfy_{i}^{2}-\bfy_{j}^{2})\right\Vert \leq\frac{4}{\delta_{1}}\left\Vert \bfy^{1}-\bfy^{2}\right\Vert,
\end{align*}
and this shows the local Lipschitz continuity of $\frac{\bfy_{i}-\bfy_{j}}{\left\Vert \bfy_{i}-\bfy_{j}\right\Vert}$ in $B_{\delta_{1}/\sqrt{2}}(\bfy^{1})$.

Since $\left(d_{ij}-\left\Vert \bfy_{i}-\bfy_{j}\right\Vert \right)\frac{\bfy_{i}-\bfy_{j}}{\left\Vert \bfy_{i}-\bfy_{j}\right\Vert }$ and $f_{\varepsilon}\left(d_{ij}-\left\Vert \bfy_{i}-\bfy_{j}\right\Vert \right)\frac{\bfy_{i}-\bfy_{j}}{\left\Vert \bfy_{i}-\bfy_{j}\right\Vert }$ are a
product of local Lipschitz functions, they are local Lipschitz, and so is \eqref{model:mollified}.

Finally, we apply Picard-Lindel\"of theorem iteratively to prove well-posedness.
By Proposition \ref{prop:uniformbound}, we have
\begin{align*}
& \quad \left\Vert \sum_{j\in\mathcal{N}_{i}}\left(d_{ij}-\left\Vert \bfy_{i}-\bfy_{j}\right\Vert \right)\frac{\bfy_{i}-\bfy_{j}}{\left\Vert \bfy_{i}-\bfy_{j}\right\Vert }+\lambda\sum_{j\in\mathcal{R}_{i}}f_{\varepsilon}\left(\tilde{d}_{ij}-\left\Vert \bfy_{i}-\bfy_{j}\right\Vert \right)\frac{\bfy_{i}-\bfy_{j}}{\left\Vert \bfy_{i}-\bfy_{j}\right\Vert }\right\Vert \\
 & \leq\sum_{j\in\mathcal{N}_{i}}\left(d_{ij}+\left\Vert \bfy_{i}\right\Vert +\left\Vert \bfy_{j}\right\Vert \right)+\lambda\sum_{j\in\mathcal{R}_{i}}\left(\tilde{d}_{ij} + \left\Vert \bfy_{i}\right\Vert +\left\Vert \bfy_{j}\right\Vert + \varepsilon\right) \\
 & \le \sum_{j\in\mathcal{N}_{i}} \left(d_{ij} + 2n\left(d_M+\phi_\varepsilon(0)^{1/2}\right)\right) +\lambda\sum_{j\in\mathcal{R}_{i}}\left(\tilde{d}_{ij} +  \varepsilon + 2n\left(d_M+\phi_\varepsilon(0)^{1/2}\right)\right)\\
 & \le C + 2n(n_\mcN + n_\mcR)\left(d_M+\phi_\varepsilon(0)^{1/2}\right) :=M,
\end{align*}
with a positive constants $C$. We can apply Picard-Lindel\"of
theorem with $0\leq t<\frac{\delta_{1}}{2\sqrt{2}M}:=t_{0}$. Hence
we find the unique continuous solution in $[0,t_{0}/2]$. Again by Proposition \ref{prop:uniformbound} and the first step, we have for any $i$ and $j\in \mcN_i \cup \mcR_i$,
\[
\left\Vert \bfy_i(t_{0}/2)\right\Vert \leq n\left(d_M+\phi_\varepsilon(0)^{1/2}\right) \quad \text{and} \quad
\left\Vert \bfy_{i}(t_{0}/2)-\bfy_{j}(t_{0}/2)\right\Vert \geq\delta_{1}.
\]
Hence we can reuse Picand-Lindel\"of theorem, which guarantees the
existence of unique solution to \eqref{model:mollified} in
$
t\in\left[\frac{t_{0}}{2},\frac{3t_{0}}{2}\right).
$
By following this process inductively, we obtain an unique solution in
$t\in[0,\infty)$.

\end{proof}

Theorem~\ref{thm:well-posed}, we can assert that there
is some initial condition that our model \eqref{model:mollified} is well-posed. Involing only neigbor points, the model~\eqref{model:gradflow} can be further analyzed.  We follow the paper~\cite{HaLiXue13}  to prove that $\bfy_{i}$ converges to a stationary limit $\bfy_{i}^\infty$, $i = 1,\ldots, n$. The following Lojasiewicz gradient inequality plays a key role.

\begin{thm} Let $U\subset\mathbb{R}^{n}$ be open and $f:U\to\mathbb{R}$
be analytic. Then for any $z_{0}\in U$, there exists constants $\gamma\in[\frac{1}{2},1),\,C_{L},\, r>0$
such that
\[
\left|f(z)-f(z_{0})\right|^{\gamma}\leq C_{L}\left\Vert \nabla f(z)\right\Vert ,\quad z\in B_{r}(z_{0})\subset U.
\]
\end{thm}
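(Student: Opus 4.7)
The plan is to prove this as the classical Lojasiewicz gradient inequality, whose substantive content belongs to real analytic geometry. I would reduce the statement to a single nontrivial step and then invoke, rather than reprove, the deepest input.

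First I would normalize: by translation take $z_{0}=0$, and by subtracting a constant take $f(z_0)=0$. If $\nabla f(0)\neq 0$, continuity yields a lower bound $\|\nabla f(z)\|\geq c>0$ together with an upper bound $|f(z)|\leq M$ on a small ball $B_r(0)$, whence $|f(z)|^{\gamma}\leq M^{\gamma}\leq (M^{\gamma}/c)\|\nabla f(z)\|$ for any $\gamma\in[1/2,1)$ with $C_L=M^\gamma/c$. So the nontrivial case is $\nabla f(0)=0$.

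Next I would argue that, after shrinking $r$, every critical point of $f$ in $B_r(0)$ has $f$-value equal to $0$; that is, $\{\nabla f=0\}\cap B_r(0)\subseteq \{f=0\}$. This uses the fact that the real-analytic set $\{\nabla f=0\}$ admits a locally finite semianalytic stratification into smooth submanifolds, that $f$ is constant on each connected stratum (since $\nabla f\equiv 0$ on the stratum forces $\tfrac{d}{dt}f\circ\sigma=0$ for any smooth arc $\sigma$ in the stratum), and that only finitely many strata accumulate at $z_0$; one then shrinks $r$ further to exclude all critical values other than $0$.

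The central step is to invoke the Lojasiewicz inequality for pairs of analytic functions: if $h,g$ are real analytic near $0$ with $\{g=0\}\subseteq\{h=0\}$, there exist $\alpha,C>0$ such that $|h(z)|^{\alpha}\leq C|g(z)|$ in a neighborhood of $0$. Applying this with $h=f$ and $g=\|\nabla f\|^{2}$ (analytic as a finite sum of squares of partial derivatives) gives $|f(z)|^{\alpha}\leq C\|\nabla f(z)\|^{2}$; taking square roots and relabeling yields the desired bound for some exponent $\gamma$. The refinement that one may take $\gamma\in[1/2,1)$---with $\gamma=1/2$ achieved at a Morse critical point---comes from a sharper comparison of orders of vanishing of $f$ and of $\nabla f$ along analytic arcs through $z_0$.

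The hard part is the Lojasiewicz inequality for zero sets itself, which is the genuine content. A direct proof would use the curve selection lemma together with Puiseux expansions: if the inequality failed, the semianalytic set on which it fails would contain a real-analytic arc $\sigma(t)\to 0$, and comparing the leading terms of the Puiseux expansions of $h\circ\sigma$ and $g\circ\sigma$ contradicts $\{g=0\}\subseteq\{h=0\}$. An alternative route passes through Hironaka--Bierstone--Milman resolution of singularities. Both routes lie well beyond the scope of the present paper, so the practical plan is to state the theorem as a tool and cite Lojasiewicz's original work or a standard reference such as Krantz--Parks.
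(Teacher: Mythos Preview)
Your bottom line---state the theorem as a tool and cite the literature---is exactly what the paper does: it presents this as the classical \L{}ojasiewicz gradient inequality with no proof whatsoever, invoking it as a black box in the subsequent convergence argument. So on the level of what appears in the paper, your proposal already matches.

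Your sketch of how a proof would go is sound and more informative than anything in the paper. Two small remarks. First, in the noncritical case $\nabla f(0)\neq 0$ one can do slightly better than your boundedness argument: the mean value inequality gives $|f(z)|\leq \sup_{B_r}\|\nabla f\|\cdot |z|$, which already shows the inequality with $\gamma=1$ and hence with any $\gamma\in[1/2,1)$ after absorbing a factor of $M^{1-\gamma}$. Second, the step you flag as a ``refinement''---that the exponent can be taken strictly below $1$---is in fact the crux: the bare zero-set \L{}ojasiewicz inequality applied to $h=f$, $g=\|\nabla f\|^2$ only gives $|f|^{\alpha}\leq C\|\nabla f\|^{2}$ for \emph{some} $\alpha>0$, hence $\gamma=\alpha/2$ with no a priori bound $\gamma<1$. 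The standard route to $\gamma<1$ is not a direct corollary of the zero-set inequality but rather a separate argument (e.g.\ via the curve selection lemma comparing the orders of vanishing of $f$ and $\nabla f$ along analytic arcs, or via resolution of singularities reducing to the normal-crossings case). You gesture at this correctly, but it is worth emphasizing that this is where the real work lies, not merely a cosmetic sharpening. The lower bound $\gamma\geq 1/2$, by contrast, is free once any $\gamma<1$ is obtained, since $|f|$ is locally bounded.
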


\begin{lem} Let $\bfy = [\bfy_i, \ldots, \bfy_n]\in \mbR^{nd}$
  be a solution to~\eqref{model:gradflow}. Then for some $T_{0}>0$, $\varepsilon\ll1$,
  $\bfy_{\infty}\in\mathbb{R}^{nd}$, we have
  \[
  \bfy(t)\in B_{\varepsilon}(\bfy_{\infty}),\quad t\geq T_{0}.
  \]
\end{lem}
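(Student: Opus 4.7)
The plan is to apply the classical Simon--Lojasiewicz convergence argument for analytic gradient flows, in the spirit of \cite{HaLiXue13}. On the open set $\Omega = \{\bfy \in \mbR^{nd} : \bfy_i \neq \bfy_j \text{ whenever } j \in \mcN_i\}$ the potential
\[
\phi(\bfy) = \frac{1}{4}\sum_{i=1}^n \sum_{j \in \mcN_i}\bigl(d_{ij} - \Vert \bfy_i - \bfy_j\Vert\bigr)^2
\]
is real analytic, and (as in Lemma \ref{lem:decreasing} with $\lambda=0$) the equation \eqref{model:gradflow} is the gradient flow $\dot\bfy = -\nabla\phi(\bfy)$. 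By Proposition \ref{prop:uniformbound} applied to this pure-neighbor setting, $\bfy(t)$ stays in a fixed compact set, while Lemma \ref{lem:decreasing} gives $\tfrac{d}{dt}\phi(\bfy(t)) = -\Vert \dot\bfy(t)\Vert^2 \le 0$. Hence $\phi(\bfy(t))$ decreases to a limit $\phi_\infty \ge 0$, and the $\omega$-limit set $\omega(\bfy)$ is nonempty, compact, and consists of equilibria on which $\phi$ equals $\phi_\infty$.

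The key step is to argue, under a smallness hypothesis on the initial data analogous to the first part of the proof of Theorem \ref{thm:well-posed}, that $\Vert \bfy_i(t) - \bfy_j(t)\Vert \ge \delta_1 > 0$ for all $t\ge 0$ and all $j \in \mcN_i$, so the trajectory is confined to $\Omega$ and $\omega(\bfy) \subset \Omega$. Pick any $\bfy_\infty \in \omega(\bfy)$ and a sequence $t_n \to \infty$ with $\bfy(t_n) \to \bfy_\infty$. Apply the Lojasiewicz gradient inequality to the analytic function $\phi - \phi_\infty$ at $\bfy_\infty$ to obtain constants $\gamma \in [\tfrac{1}{2},1)$, $C_L > 0$, and $r > 0$ such that
\[
\bigl(\phi(\bfy) - \phi_\infty\bigr)^\gamma \le C_L \Vert \nabla\phi(\bfy)\Vert, \quad \bfy \in B_r(\bfy_\infty).
\]

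Next I would compute, for $\bfy(t)$ lying in this ball,
\[
-\frac{d}{dt}\bigl(\phi(\bfy(t)) - \phi_\infty\bigr)^{1-\gamma} = (1-\gamma)\bigl(\phi - \phi_\infty\bigr)^{-\gamma} \Vert \dot\bfy\Vert^2 \ge \frac{1-\gamma}{C_L}\Vert \dot\bfy(t)\Vert,
\]
where the last inequality uses Lojasiewicz together with $\Vert \nabla\phi\Vert = \Vert \dot\bfy\Vert$. Integration yields the arc-length bound $\int_s^t \Vert \dot\bfy(u)\Vert\,du \le \tfrac{C_L}{1-\gamma}\bigl(\phi(\bfy(s)) - \phi_\infty\bigr)^{1-\gamma}$. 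Choosing $T_0 = t_n$ with $n$ so large that $\Vert\bfy(T_0) - \bfy_\infty\Vert + \tfrac{C_L}{1-\gamma}\bigl(\phi(\bfy(T_0)) - \phi_\infty\bigr)^{1-\gamma} < \min(\varepsilon, r)$ (possible since both quantities vanish along $t_n$), a standard bootstrap shows $\bfy(t)$ cannot exit $B_\varepsilon(\bfy_\infty)$ for any $t\ge T_0$: at a first exit time $\tau$ the triangle inequality and the arc-length bound would contradict $\Vert \bfy(\tau) - \bfy_\infty\Vert = \varepsilon$.

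The main obstacle is precisely the non-collision requirement that confines $\bfy(t)$ to the analytic domain $\Omega$. Without a quantitative lower bound on $\Vert \bfy_i - \bfy_j\Vert$ for $j \in \mcN_i$, the Lojasiewicz inequality is unavailable and the $\omega$-limit point $\bfy_\infty$ could in principle be singular; once this bound is secured---either via a direct energy comparison as in \eqref{C-2-1} or by a standing hypothesis on the initial configuration---the remainder is the routine Lojasiewicz arc-length estimate.
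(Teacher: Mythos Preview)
Your proposal is correct and follows essentially the same Lojasiewicz arc-length argument as the paper: extract an $\omega$-limit point $\bfy_\infty$ from boundedness and monotonicity, apply the Lojasiewicz inequality at $\bfy_\infty$, differentiate $(\phi-\phi_\infty)^{1-\gamma}$ to obtain the bound $\int \Vert\dot\bfy\Vert \le \tfrac{C_L}{1-\gamma}(\phi-\phi_\infty)^{1-\gamma}$, and run a first-exit-time contradiction. The constants are organized slightly differently (the paper splits $\varepsilon$ into thirds rather than combining the two smallness conditions), but the mechanism is identical. Your explicit treatment of the analyticity domain $\Omega$ and the non-collision lower bound is in fact more careful than the paper's own proof, which applies Lojasiewicz without commenting on whether $\bfy_\infty$ lies in the region where $\phi$ is analytic.
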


\begin{proof}
Similar to Proposition~\ref{prop:uniformbound}, the trajectory $\{\bfy(t)\}_{t\ge 0} $ is bounded, so we can choose
\begin{equation*}
  t_{n}\nearrow\infty,\quad\lim_{n\to\infty}\bfy(t_{n})=\bfy_\infty.
\end{equation*}
By the monotone property, we have
  \[
    \phi(\bfy(t_{n}))\searrow\phi_{\infty}.
  \]
Therefore
  \[
    \phi(\bfy_\infty)=\phi_{\infty}.
  \]

By Lojasiewicz's inequality at $\bfy=\bfy_{\infty}$, there exists some $\gamma\in[\frac{1}{2},1),\ C_{L}>0$, $r>0$ such that
  \[
    \left|\phi(\bfy)-\phi(\bfy_{\infty})\right|^{\gamma}\leq C_{L}\left\Vert \nabla\phi(\bfy)\right\Vert,\quad\bfy\in B_{r}(\bfy_{\infty}).
  \]
Consider an auxiliary function
  \begin{equation}\label{C-3-2}
    f(t)=\left(\phi(\bfy)-\phi(\bfy_{\infty})\right)^{1-\gamma},
  \end{equation}
then we have
  \[
    f(t)\searrow0\quad\text{as}\quad t\to\infty.
  \]
Therefore, for sufficiently small $\varepsilon\in(0,r)$, there exists $T_{0}>0$ such that
  \begin{equation}
    \left|f(t)-f(T_{0})\right|\leq\frac{\varepsilon(1-\gamma)}{3C_{L}},\quad t\geq T_{0}.\label{C-3-3}
  \end{equation}
Furthermore, we can select $T_{0}$ to satisfy
  \begin{equation}\label{C-3-4}
    \left\Vert \bfy(T_{0})-\bfy_{\infty}\right\Vert \leq\frac{\varepsilon}{3}.
  \end{equation}

Now, suppose that there exists $t_{1}>T_{0}$ such that
  \[
    \bfy(t_{1})\in B_{\varepsilon}(\bfy_{\infty})^{c}.
  \]
    Let $T_{1}$ to be the first exit time from the region $B_{\varepsilon}(\bfy_{\infty}):$
    \[
    T_{1}:=\inf\left\{ t\geq T_{0}:\bfy(t)\notin B_{\varepsilon}(\bfy_{\infty})\right\},
    \]
    which leads to
    \begin{equation}
    \left\Vert \bfy(T_{1})-\bfy_{\infty}\right\Vert =\varepsilon.\label{C-3-5}
    \end{equation}
    By \eqref{C-3-2} and the Lojasiewics's inequality, we have
    \begin{equation}
    \begin{aligned}\frac{df}{dt} & =\left(1-\gamma\right)\left(\phi(\bfy)-\phi(\bfy_{\infty})\right)^{-\gamma}\frac{d}{dt}\phi(\bfy(t))\\
     & =-\left(1-\gamma\right)\left(\phi(\bfy)-\phi(\bfy_{\infty})\right)^{-\gamma}\left\Vert \nabla\phi(\bfy(t))\right\Vert ^{2}\\
     & \leq-\frac{1-\gamma}{C_{L}}\left\Vert \nabla\phi(\bfy(t))\right\Vert ,\quad t\in[T_{0},T_{1}].
    \end{aligned}
    \label{C-3-6}
    \end{equation}

    We combine \eqref{C-3-3} and \eqref{C-3-6} to conclude
    \begin{equation}
    \int_{T_{0}}^{t}\left\Vert \frac{d}{dt}\bfy(s)\right\Vert ds=\int_{T_{0}}^{t}\left\Vert \nabla\phi(\bfy(t))\right\Vert ds\leq-\frac{C_{L}}{1-\gamma}(f(t)-f(T_{0}))\leq\frac{\varepsilon}{3},\quad t\in[T_{0},T_{1}].\label{C-3-7}
    \end{equation}
    On the other hand, by \eqref{C-3-4} and \eqref{C-3-7}, we obtain
    \begin{align*}
    \left\Vert \bfy(T_{1})-\bfy_{\infty}\right\Vert  & \leq\left\Vert \bfy(T_{1})-\bfy(T_{0})+\bfy(T_{0})-\bfy_{\infty}\right\Vert \\
     & \leq\int_{T_{0}}^{T_{1}}\left\Vert \frac{d}{dt}\bfy(s)\right\Vert ds+\left\Vert \bfy(T_{0})-\bfy_{\infty}\right\Vert \leq\frac{2\varepsilon}{3},
    \end{align*}
    contradicts to \eqref{C-3-5}.

    \end{proof}

\begin{thm} Let $\bfy=\left\{ \bfy_{i}\right\} _{i = 1, \ldots, n}$ be a solution to~\eqref{model:gradflow}. Then we have
\begin{enumerate}[(i)]
  \item $\bfy_{i}\to\bfy_{i}^{\infty},\quad i = 1, \ldots, n$ for some $\bfy_{i}^{\infty}\in\mathbb{R}^{d}$.
  \item $\dot{\bfy}_{i}\to \mathbf{0},\quad i = 1, \ldots, n.$
  \item $e_{ij}:=d_{ij} - \Vert \bfy_i - \bfy_j \Vert\to e_{ij}^{\infty},\quad j\in\mathcal{N}_{i}$ for some $e_{ij}^{\infty}\in\mathbb{R}_{\geq0}$.
\end{enumerate}
\end{thm}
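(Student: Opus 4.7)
The plan is to leverage the previous lemma, which confines the trajectory to a small ball $B_\varepsilon(\bfy_\infty)$ for $t \geq T_0$, and then recycle the Lojasiewicz auxiliary-function estimate but extended all the way to the half-line. Let $\gamma \in [1/2,1)$, $C_L$, and $r$ be the Lojasiewicz constants at $\bfy_\infty$; these exist because $\phi$ in \eqref{model:gradflow} is real-analytic on the open collision-free set, and by the uniform lower bound $\|\bfy_i - \bfy_j\| \geq \delta_1$ from the well-posedness proof (applied to \eqref{model:gradflow} as the $\lambda = 0$ specialization of Theorem~\ref{thm:well-posed}) the limit $\bfy_\infty$ lies in this set. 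Define $f(t) = (\phi(\bfy(t)) - \phi(\bfy_\infty))^{1-\gamma}$ as in \eqref{C-3-2}. On $[T_0,\infty)$, the computation in \eqref{C-3-6} combined with the gradient-flow identity $\dot\bfy = -\nabla\phi(\bfy)$ gives
\[
\frac{df}{dt} \leq -\frac{1-\gamma}{C_L}\,\|\dot\bfy(t)\|.
\]
Integrating from $T_0$ to $T$ and letting $T \to \infty$, and using $f(t) \searrow 0$, yields
\[
\int_{T_0}^\infty \|\dot\bfy(s)\|\, ds \leq \frac{C_L}{1-\gamma}\, f(T_0) < \infty,
\]
so the trajectory has finite arclength and hence $\bfy(t)$ is Cauchy in $\mbR^{nd}$. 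Its limit $\bfy^\infty$ (which necessarily coincides with $\bfy_\infty$) reads off componentwise to give (i).

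For (ii), the uniform distance lower bound $\|\bfy_i(t)-\bfy_j(t)\| \geq \delta_1$ for $j \in \mcN_i$ transfers to the limit, so $\nabla\phi$ is continuous on a neighborhood of $\bfy^\infty$ and in particular along the tail of the trajectory. Consequently $\dot\bfy(t) = -\nabla\phi(\bfy(t)) \to -\nabla\phi(\bfy^\infty) =: v$. If $v \neq 0$, then $\|\dot\bfy(t)\| \geq \|v\|/2$ for all sufficiently large $t$, contradicting the finite integral just established; hence $v = \mathbf{0}$, so $\dot\bfy_i(t) \to \mathbf{0}$ for every $i$.

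Part (iii) follows at once from (i) and continuity of $(\bfy_i,\bfy_j)\mapsto \|\bfy_i-\bfy_j\|$: one has $e_{ij}(t) \to d_{ij} - \|\bfy_i^\infty - \bfy_j^\infty\| =: e_{ij}^\infty$. The asserted nonnegativity $e_{ij}^\infty \geq 0$ is the only delicate point, as it is not a formal consequence of convergence. One route is to test the equilibrium identity $\sum_{j\in\mcN_i}(d_{ij}-\|\bfy_i^\infty-\bfy_j^\infty\|)(\bfy_i^\infty-\bfy_j^\infty)/\|\bfy_i^\infty-\bfy_j^\infty\| = \mathbf{0}$ against the direction $\bfy_i^\infty - \bfy_{j^\ast}^\infty$ for $j^\ast \in \mcN_i$ achieving the minimum of $\|\bfy_i^\infty - \bfy_j^\infty\|$ and conclude by the extremality; alternatively, reading (iii) with $|e_{ij}|$ in place of $e_{ij}$ makes the nonnegativity automatic.

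The main obstacle I anticipate is the sign assertion in (iii); everything else is a fairly mechanical extraction from the Lojasiewicz machinery already assembled in the preceding lemma, modulo the single check that the collision-free neighborhood in which $\phi$ is analytic survives up to and including the limit point, which is exactly what the uniform separation from the well-posedness proof provides.
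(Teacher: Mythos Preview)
Your argument for (i) is the same as the paper's: extend the Lojasiewicz estimate \eqref{C-3-6}--\eqref{C-3-7} to $[T_0,\infty)$ to get finite arclength and hence convergence. For (ii) the paper simply asserts $\nabla_\bfy\phi(\bfy_\infty)=\mathbf{0}$ from the gradient-flow structure, whereas you supply the missing justification via continuity of $\nabla\phi$ and the finite-integral contradiction; your version is the more complete one.

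For (iii) you take a different route. The paper does not invoke continuity of the norm; instead it bounds $\bigl|\tfrac{d}{dt}e_{ij}\bigr| \le \|\dot\bfy_i\|+\|\dot\bfy_j\|$ and uses the finite arclength to conclude $\tfrac{d}{dt}e_{ij}\in L^1$, whence $e_{ij}$ converges. Your direct continuity argument is shorter and equally valid.

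You are right to flag the sign assertion $e_{ij}^\infty\in\mathbb{R}_{\ge 0}$ as the genuine sticking point. Your proposed ``test against the extremal direction'' does not actually force nonnegativity: dotting the equilibrium identity with $\bfy_i^\infty-\bfy_{j^\ast}^\infty$ only yields a weighted sum of cosines equal to zero, which says nothing about individual signs. In fact the paper's own proof establishes only convergence and never addresses nonnegativity, so the claim $e_{ij}^\infty\ge 0$ is unproven there as well. It appears to be a misstatement: one can build equilibria of \eqref{model:gradflow} with some $e_{ij}^\infty<0$ (for instance, a path graph in $\mathbb{R}^1$ with an additional long edge between the endpoints yields a ``stressed'' equilibrium in which the short edges carry negative $e_{ij}$). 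Your second suggestion---read the conclusion with $|e_{ij}|$---is the honest fix.
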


\begin{proof} (i) We modify \eqref{C-3-7} to derive
\begin{equation}
\int_{T_{0}}^{\infty}\left\Vert \frac{d}{dt}\bfy(s)\right\Vert ds\leq\frac{\varepsilon}{3},\quad t\in[T_{0},\infty),\label{C-4-1}
\end{equation}
which implicates that the trajectory $\bfy(t)$ has a finite
length and it converges:
\[
\lim_{t\to\infty}\bfy(t)=\bfy_{\infty}.
\]
(ii) Since the model~\eqref{model:gradflow} is a gradient flow, we have
\[
\nabla_{\bfy}\phi(\bfy_{\infty})= \bfzr
\]
and
\[
\dot{\bfy}_{i}\to0,\quad i = 1, \ldots, n.
\]
(iii) By direct calculation,
\[
\left|\frac{d}{dt}e_{ij}\right|=\left|\left\langle \frac{\bfy_{i}-\bfy_{j}}{\left\Vert \bfy_{i}-\bfy_{j}\right\Vert },\,\dot{\bfy}_{i}-\dot{\bfy}_{j}\right\rangle \right|\leq\left\Vert \dot{\bfy}_{i}-\dot{\bfy}_{j}\right\Vert \leq\left\Vert \dot{\bfy}_{i}\right\Vert +\left\Vert \dot{\bfy}_{j}\right\Vert
\]
holds and by \eqref{C-4-1},
\[
\left|\int_{0}^{t}\frac{d}{ds}e_{ij}(s)ds\right|\leq\int_{0}^{t}\left|\frac{d}{ds}e_{ij}(s)\right|ds\leq\int_{0}^{\infty}\left\Vert \dot{\bfy}_{i}(s)\right\Vert +\left\Vert \dot{\bfy}_{j}(s)\right\Vert ds<\infty,\quad t\geq0.
\]
Hence $\frac{d}{dt}e_{ij}$ is integrable and $e_{ij}$ is converges.

\end{proof}

\section{Computational Scheme and Experiments}
This section considers the computational scheme of the model and presents numerical experiments. Overall, we utilize the forward Euler method to solve the dynamical system.

Under the assumption of Riemannian manifold $\mcM$ in $\mbR^m$, we approximate the geodesic distance $d_\mcM(\bfx_i, \bfx_j)$ by the Euclidean distance $\Vert \bfx_i - \bfx_j \Vert$ in $\mbR^m$ for the neighbor point $\bfx_j$ of $\bfx_i$, denoted as $d_{ij}$. For $\tild_\mcM(\bfx_i,\bfx_j)$ of the remote point $\bfx_j$, one may adopt $\Vert \bfx_i - \bfx_j \Vert$ in $\mbR^m$ as a lower bound, or apply Dijkstra's algorithm to the graph generated by the neighbor points to approximate $d_\mcM(\bfx_i, \bfx_j)$. In the experiments, we apply the latter, unless otherwise specified. Using Dijkstra's algorithm is particularly useful for curvy manifolds. Similarly, we re-express $\tild_\mcM(\bfx_i,\bfx_j)$ as $\tild_{ij}$. The computation scheme is a simple foward Euler scheme:
\begin{equation}\label{model:compu}
  \begin{split}
  \bfy_i^{t+\Delta t} = \bfy_i^t  + &\Delta t\sum_{j \in \mcN_i} \big(d_{ij} - \Vert \bfy_i^t - \bfy_j^t \Vert \big)\frac{\bfy_i^t - \bfy_j^t}{\Vert \bfy_i^t - \bfy_j^t \Vert + \delta} \\
  & \quad +  \Delta t\lambda_t\sum_{j \in \mcR_i} \big(\tild_{ij}-\Vert \bfy_i^t - \bfy_j^t \Vert \big)_+\frac{\bfy_i^t - \bfy_j^t}{\Vert \bfy_i^t - \bfy_j^t \Vert + \delta}.
  \end{split}
\end{equation}
Here, we add small $\delta > 0$ to avoid degeneracy and the superscript indicates time or iteration. We stop the iteration if $\max_{1\le i \le n} \Vert\bfy_i^{t+\Delta t} - \bfy_i^t\Vert < \eta$  for some $\eta > 0$.

As a preprocess, we normalize data to facilitate parameter selection. Let $\bar{\mu}(\cdot)$ and $\bar{\sigma}(\cdot)$ be the sample mean and the sample standard deviation of data, respectively. We normalize data as follows:
$$\frac{\bfx_i - \bar{\mu}(\bfx_i)}{\bar{\mu}(\Vert\bfx_i\Vert)+ \bar{\sigma}(\Vert\bfx_i\Vert)}.$$
Roughly speaking, after this process, the majority of the data are located within the ball of radius 2 centered at the origin. Now, we can choose uniform parameters $\delta = 10^{-7}$, $\Delta t = 0.2$, and $\eta = 10^{-3}$ for all experiments. We also set $\lambda_t = 1$. In our experience, there is no significant difference when letting $\lambda_t \rightarrow 0$.

For the neighbor point set $\mcN_i$, we use $k$-nearest points with $k = 20$ in the experiment, which we denote as kDRFC. In this case, there is no symmetry in the neighbor sets; $j\in\mcN_i$ does not guarantee $i\in\mcN_j$. For symmetry, one may consider an $\epsilon$-neighborhood. However, since it is also based on Euclidean distance, it can be challenging to find a suitable $\epsilon$. Some points may have too many neighbors, while others may be isolated. To ensure a similar number of neighbor points, first we choose the $k$-nearest points and then symmetrize the neighborhood. If $\bfx_j$ is within the $k$-nearest points of $\bfx_i$, we includes $i\in\mcN_j$ at the same time as $j\in\mcN_i$. In the experiments, we choose $k = 18$ and the average number of neighbor points is approximately $20$ after symmetrization. We denote this approch as sDRFC.

For the remote point set $\mcR_i$, we aim to select a similar number of points as in $\mcN_i$ from the set $\{1,\ldots,n\}\setminus (\mcN_i \cup \{i\})$. In the case of kDRFC, we randomly choose $20$ points for experiments. Similarly, for sDRFC, we randomly select around half of the average number of points in $\mcN_i$ and then symmetrize; if $j$ is in $\mcR_i$, we include $i$ in $\mcR_j$. Since random selections are usually disjoint, the number of points is doubled after symmetrization. Thus, we select $10$ points.

For initial guess, we perform an orthogonal projection onto a randomly selected $d$-dimensional subspace of $\mbR^m$ and use the resulting coordinate vectors. In addition, we add random noise using a Gaussian distribution. To generate an orthonormal basis for such a subspace, we randomly select $d$ vectors in $\mbR^m$ and apply QR decomposition to these vectors.

The cost of one iteration is $O(knd)$ for kDRFC and sDRFC has a similar computational cost. Since we only establish the local stability of the reduced model, we do not have the exact convergence rate. With the pre-specified parameters, typically $150$ to $300$ iterations are required to satisfy the stopping criterion on the synthetic data sets. We note the non-uniqueness of solutions and as a consequence, computational results may depend on the initial guess. Since the computational cost is not high, we conduct $5$ runs with different initial guesses and report the minimum value of~\eqref{model:local}.

We have tested our model on synthetic and real datasets benchmarked in~\cite{vandermaaten2009}. The synthetic data sets are the Swiss roll dataset, the helix dataset, the twin peaks dataset, the broken Swiss roll dataset, as shown in Figure~\ref{fig:synthetic}. All those datasets consist of $5,000$ samples, unless otherwise specified. The real data sets include the MNIST dataset, the COIL20 dataset, the ORL dataset, and the HIVA dataset.
The MNIST dataset consists of $60,000$ handwritten digits of size $28 \times 28$ pixels, from which $5,000$ digits are randomly selected for experiments.
The COIL20 dataset contains $32 \times 32$ images of $20$ different objects from $72$ viewpoints, totally $1,440$ images. The ORL dataset is a face recognition dataset of $400$ grayscale images with size $112 \times 92$ pixels that illustrates $40$ faces under various conditions. The HIVA dataset is a drug discovery dataset with two classes and consists of $3,845$ datapoints with dimensionality $1,617$. We refer to~\cite{vandermaaten2009} for more details.

\begin{figure}[t]
  \center{
  \subfigure[Swiss roll]
  {\includegraphics[width=5cm]{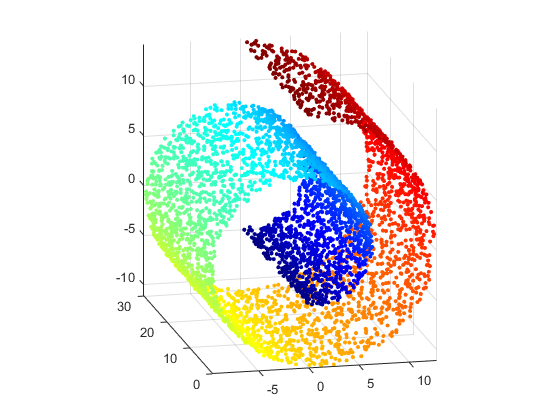}}
  \subfigure[Helix]
  {\includegraphics[width=5cm]{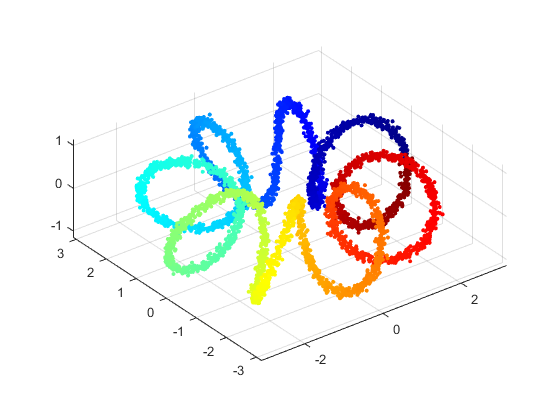}}
  \subfigure[Twin peaks]
  {\includegraphics[width=5cm]{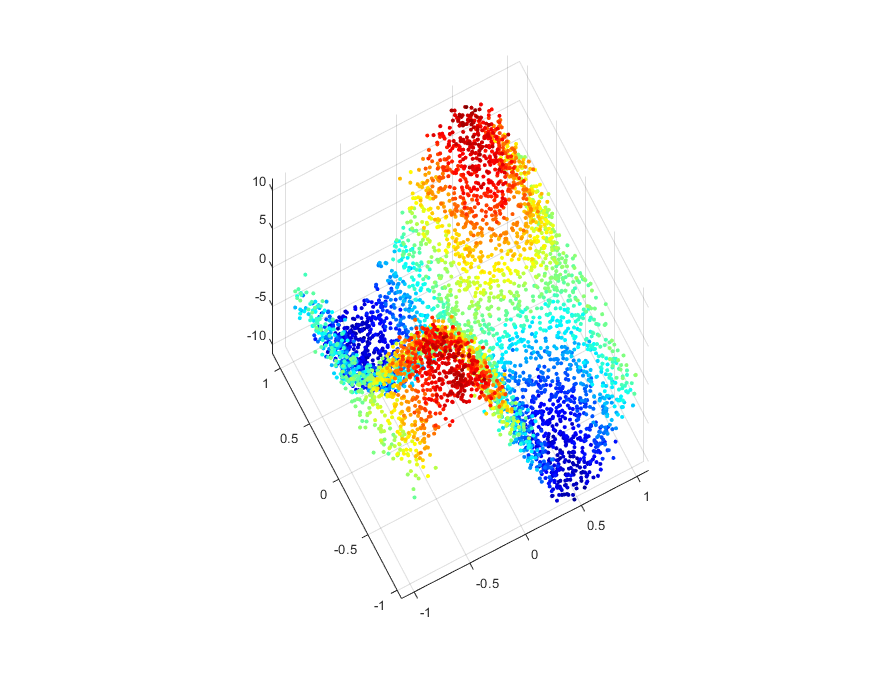}}
  \subfigure[Broken Swiss roll]
  {\includegraphics[width=5cm]{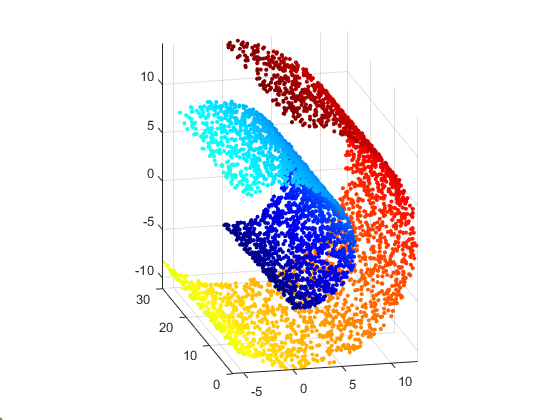}}
}
\caption{4 synthetic data sets}\label{fig:synthetic}
\end{figure}

Figures \ref{fig:swissroll} and \ref{fig:helix} show the progress of the scheme \eqref{model:compu} on the synthetic data sets, Swiss roll and helix. In Figure \ref{fig:swissroll}(a), the initial guess is displayed in the two-dimensional plane for the Swiss roll dataset. Figures \ref{fig:swissroll} (b), (c), and (d) present the results after $10$, $30$ and $100$ iterations, respectively. After around $100$ iterations, a rectangular shape clearly emerges, a flattened Swiss roll. The subfigures of Figure \ref{fig:helix} are similarly interpreted, and Figures \ref{fig:swissroll} (c) and (d) reveal a circle structure. We apply kDRFC for Figure \ref{fig:swissroll} and sDRFC for Figure \ref{fig:helix}.

\begin{figure}[t]
  \center{
  \subfigure[Initial guess]
  {\includegraphics[width=5cm]{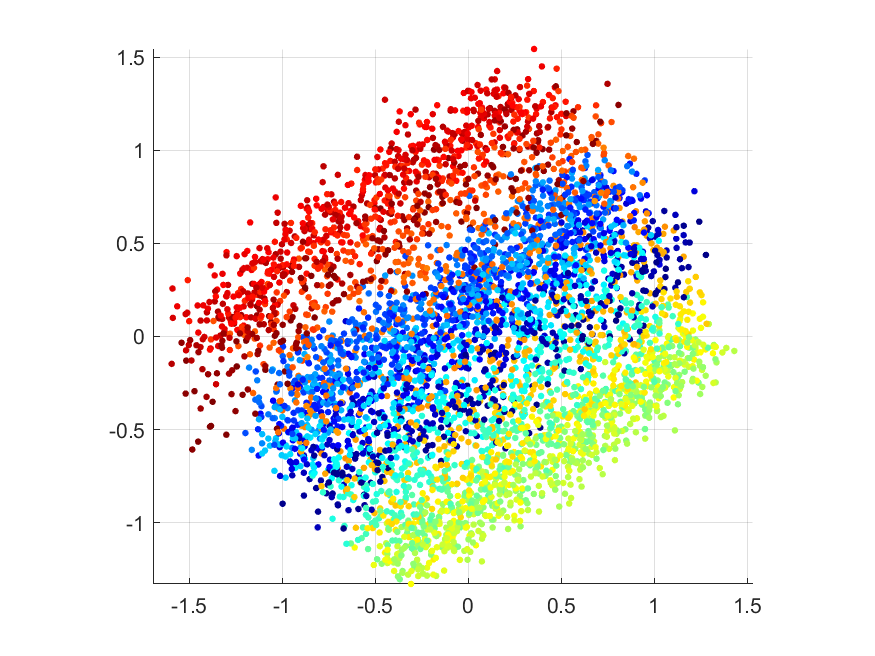}}
  \subfigure[10 iterations]
  {\includegraphics[width=5cm]{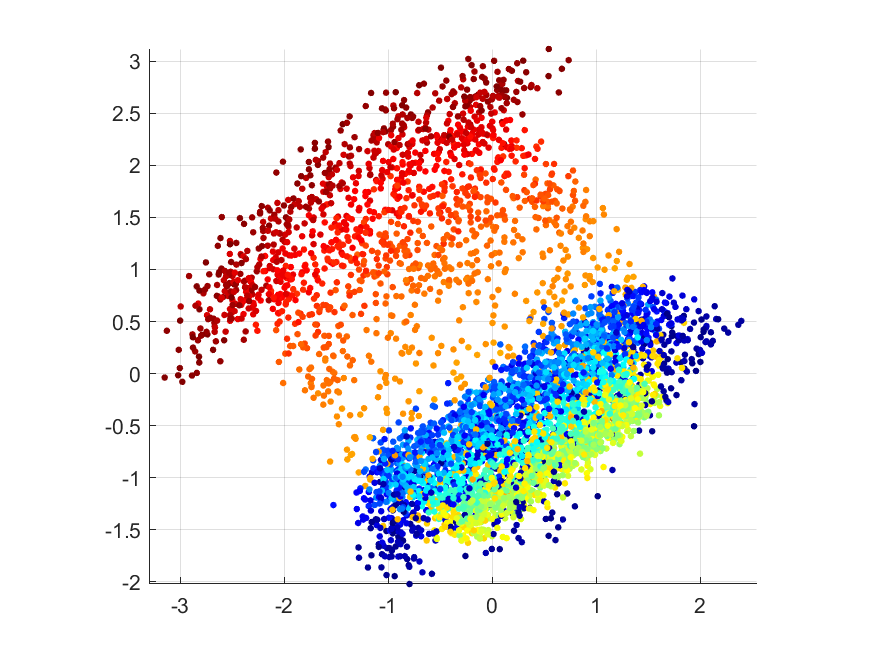}}
  \subfigure[30 iterations]
  {\includegraphics[width=5cm]{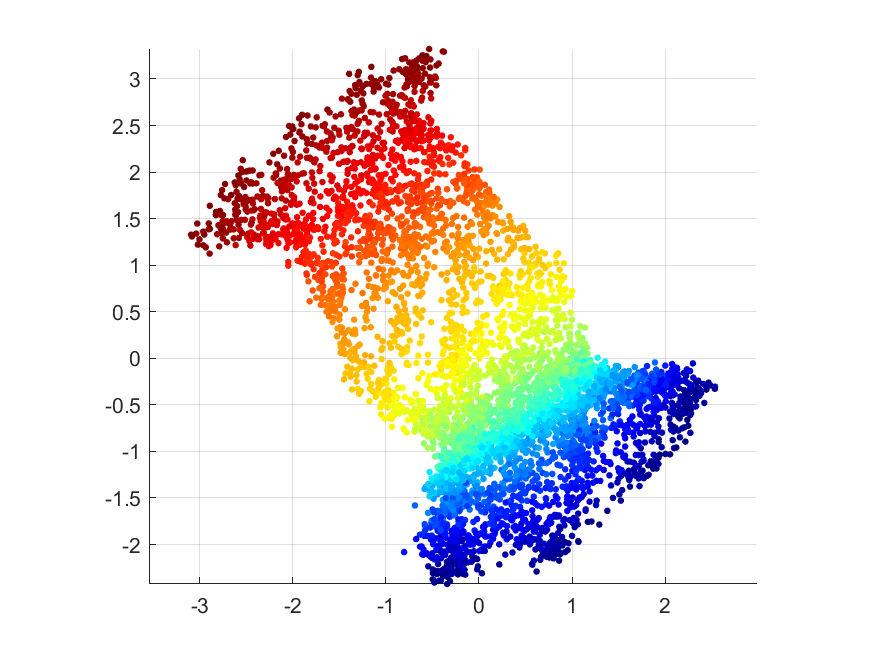}}
  \subfigure[100 iterations]
  {\includegraphics[width=5cm]{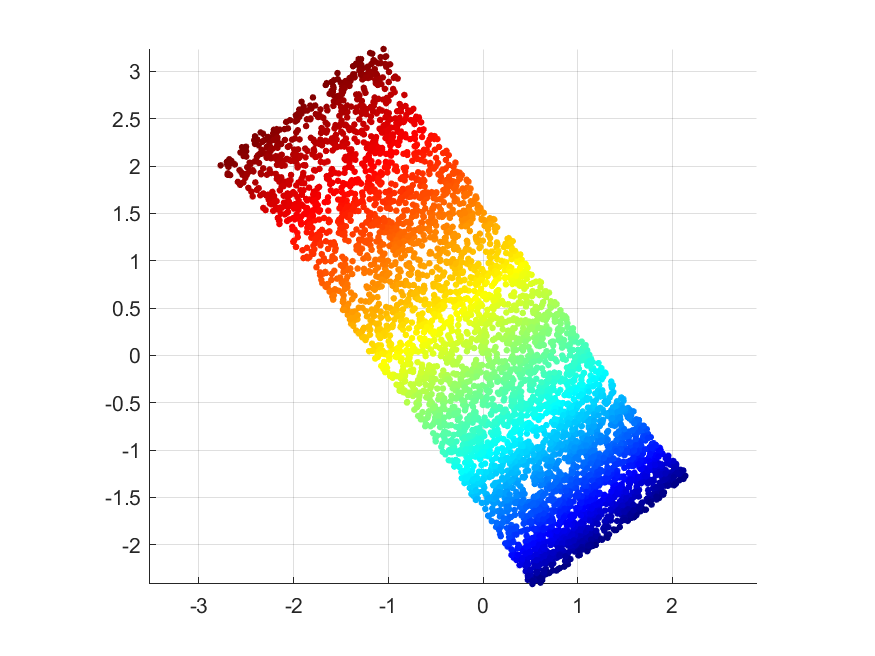}}
}
\caption{Computation of Swiss roll data using kDRFC}\label{fig:swissroll}
\end{figure}

\begin{figure}[t]
  \center{
  \subfigure[Initial guess]
  {\includegraphics[width=5cm]{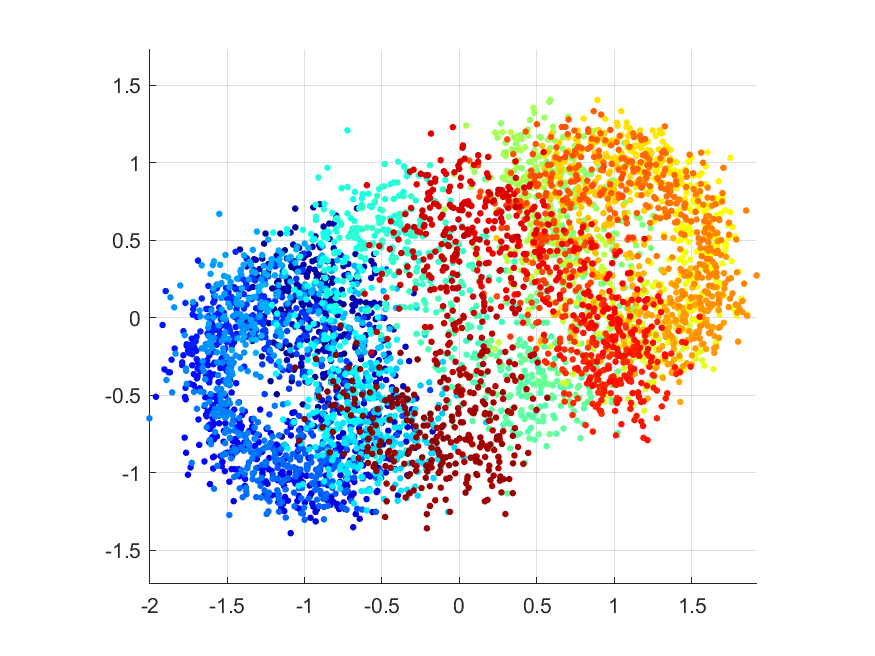}}
  \subfigure[40 iterations]
  {\includegraphics[width=5cm]{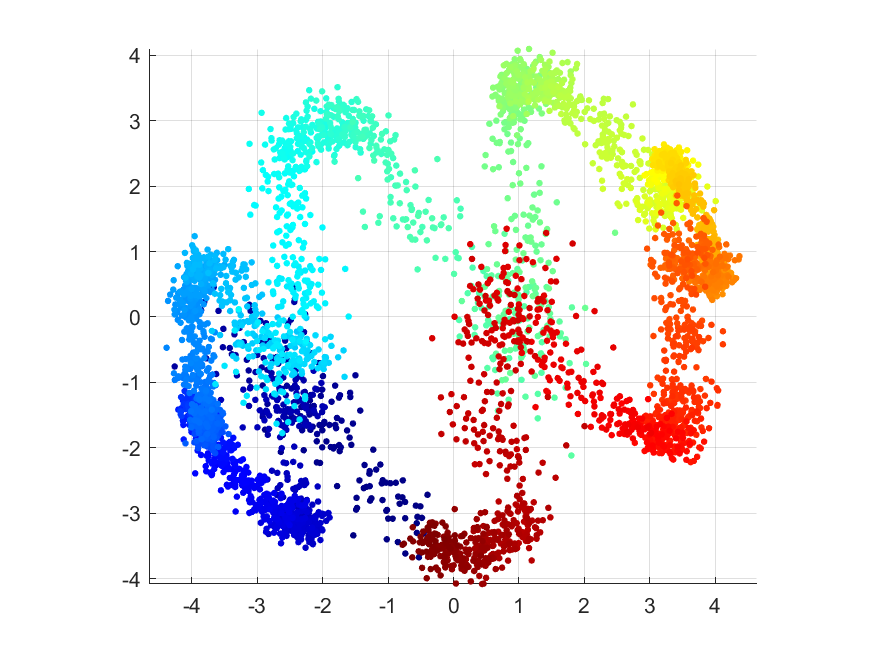}}
  \subfigure[80 iterations]
  {\includegraphics[width=5cm]{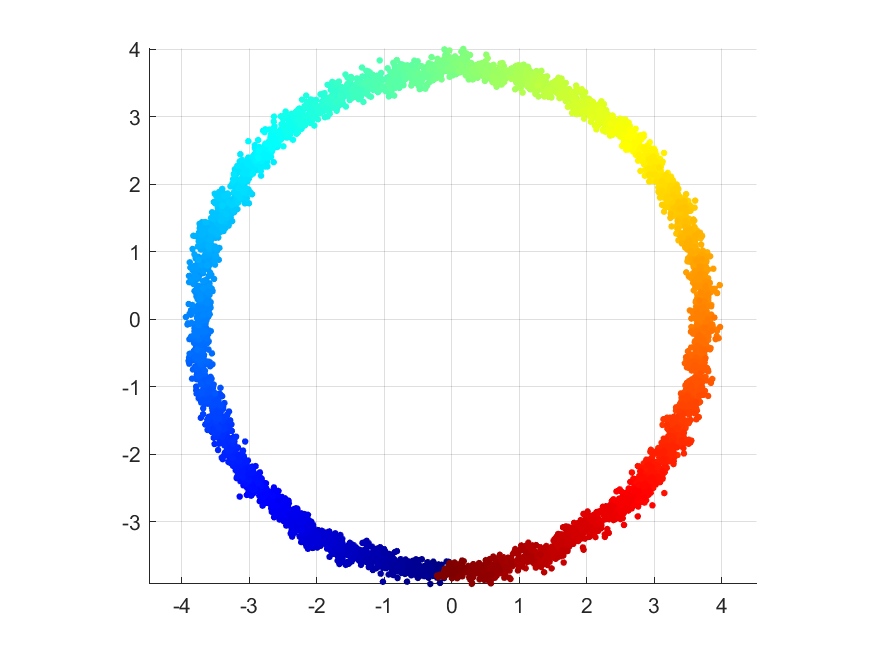}}
  \subfigure[250 iterations]
  {\includegraphics[width=5cm]{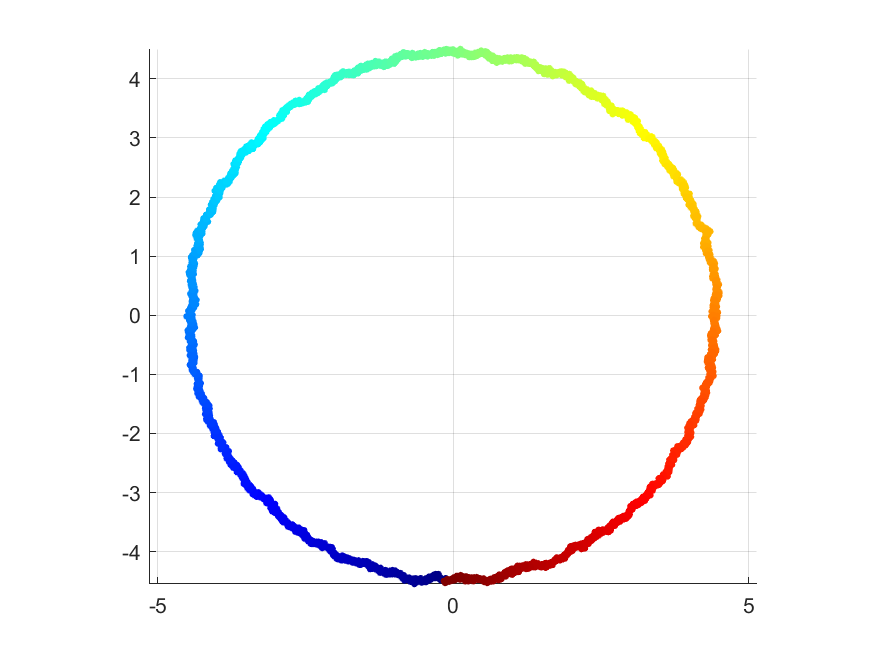}}
}
\caption{Computation of Helix data using sDRFC}\label{fig:helix}
\end{figure}

To assess the quality of the resulting low-dimensional representations, we examine the preserved local structure of the data~\cite{vandermaaten2009}. We measure this through the generalization errors of 1-nearest neighbor classifiers trained on the low-dimensional data representation~\cite{Guido08}, as well as evaluating the \textit{trustworthiness} and \textit{continuity} of the low-dimensional embeddings~\cite{VennaKaski06}. The trustworthiness measures the proportion of points that are positioned too closely in the low-dimensional space. The trustworthiness is defined as
\begin{equation*}
 T(k) = 1-\frac{2}{nk(2n-3k-1)}\sum_{i=1}^n\sum_{j\in U_i^{(k)}}\big(r(i, j)-k\big),
\end{equation*}
where $r(i, j)$ represents the rank of the point $j$ based on pairwise distances in the low-dimensional representations. The set $U_i^{(k)}$ consists of the points that are among the $k$ nearest neighbors in the low-dimensional space but not in the high-dimensional space. The continuity
measure is defined as
\begin{equation*}
C(k) = 1-\frac{2}{nk(2n-3k-1)}\sum_{i=1}^n\sum_{j\in V_i^{(k)}}\big(\hat{r}(i, j)-k\big),
\end{equation*}
where $\hat{r}(i, j)$ represents the rank of the point $j$ based on the pairwise distances between the points in the high-dimensional space. The set $V_i^{(k)}$ contains the points that are among the $k$ nearest neighbors in the high-dimensional space but not in the low-dimensional space.

For comparison with other methods, we import the computational results from~\cite{vandermaaten2009}. The compared methods are convex techniques, including Principal Components Analysis (PCA), Isomap, Kernel PCA (KPCA),  Maximum Variance Unfolding (MVU), diffusion maps (DM), Local Linear Embedding (LLE),  Laplacian Eigenmaps (LEM), Hessian LLE (HLLE), and Local Tangent Space Analysis (LTSA). The first five methods are classified as full spectral techniques, and the last four are sparse spectral techniques. Although~\cite{vandermaaten2009} mentions additional non-convex techniques, we omit them here as there are no significant differences in comparison. For detailed descriptions and references of these methods, we refer readers to~\cite{vandermaaten2009}.

The generalization errors of 1-nearest neighbor classifiers trained on the low-dimensional data representation are given in Table~\ref{table:gen_error}. In the table, the leftmost column include the abbreviation for the dataset and the target dimensionality to transform the high dimensional data. ``None" refers to the results obtained in the original $\mbR^D$ space. The best performing method is highlighted in bold  in each dataset. To evaluate the proposed methods, we report the average errors over $5$ trials. The performance of the proposed methods is the best among them for Swiss roll, Broken Swiss roll, and COIL20 and highly ranked for other cases.

The trustworthiness and continuity calculated using the $12$ nearest neighbors are reported in Tables \ref{table:truth} and \ref{table:conti}, respectively. The proposed method shows the best or top-ranked performance, conceivably due to the local distance constraint \eqref{model:local} or the first term in the system \eqref{model:full}.

\begin{table}[t]
 \center{
	\caption{ Generalization errors (\%) of 1-NN classifiers}\label{table:gen_error}
  {\tiny\addtolength{\tabcolsep}{-3.5pt}
\begin{tabular}{|c|c||c|c|c|c|c||c|c|c|c||c|c|} 
  \hline			
  Dataset (d) & None & PCA & Isomap & KPCA & MVU & DM & LLE & LEM & HLLE & LTSA & kDRFC & sDRFC \\
  \hline
Swiss roll (2D) & 3.68 & 29.76 & 3.40 & 30.24 & 4.12 & 33.50 & 3.74 & 22.06 & 3.56 & 3.90 & \textbf{2.77}  &  2.86 \\
Helix (1D) & 1.24 & 35.50 & 13.18 & 38.04 & 7.48 & 35.44 & 32.32 & 15.24 & 52.22 & \textbf{0.92} &  2.93 &  3.06 \\
Twin peaks (2D) & 0.40 & 0.26 & 0.22 & \textbf{0.12} & 0.56 & 0.26 & 0.94 & 0.88 & 0.14 & 0.18 & 0.42 & 0.55 \\
Broken Swiss (2D) & 2.14 & 25.96 & 14.48 & 32.06 & 32.06 & 58.26 & 36.94 & 10.66 & 6.48 & 15.86 & \textbf{3.60}  & 3.81 \\
\hline
MNIST (20D) & 5.11 &  \textbf{6.74} & 12.64 & 13.86 & 13.58 & 25.00 & 10.02 & 11.30 & 91.66 & 90.32 & 10.35  & 8.80  \\
COIL20 (5D) & 0.14 & 3.82 & 15.69 & 7.78 & 25.14 & 11.18 & 22.29 & 95.00 & 50.35 & 4.17 & 5.57  &  \textbf{3.13} \\
ORL (8D) & 2.50 & \textbf{4.75} & 27.50 & 6.25 & 24.25 & 90.00 & 11.00 & 97.50 & 56.00 & 12.75 &  12.65 & 8.25 \\
HIVA (15D) & 4.63 & 5.05 & 4.92 & 5.07 & 4.94 & 5.46 & 4.97 & 4.81 & \textbf{3.51} & \textbf{3.51} & 4.96 & 5.00 \\
  \hline
\end{tabular}
  }
 }
\end{table}

\begin{table}[t]
  \center{
   \caption{Trustworthinesses T(12)}\label{table:truth}
   {\tiny\addtolength{\tabcolsep}{-2pt}
 \begin{tabular}{|c|c|c|c|c|c||c|c|c|c||c|c|} 
   \hline			
   Dataset (d) & PCA & Isomap & KPCA & MVU & DM & LLE & LEM & HLLE & LTSA & kDRFC & sDRFC \\
   \hline
 Swiss roll (2D) &  0.88 & 0.99 & 0.88 & 1.00 & 0.81 & \textbf{1.00} & 0.92 & \textbf{1.00} & \textbf{1.00}  & \textbf{1.00} &  \textbf{1.00} \\
 Helix (1D) &  0.78 & 0.74 & 0.71 & 0.96 & 0.73 & 0.83 & 0.87 & 0.35 & \textbf{1.00} & \textbf{1.00} & \textbf{1.00} \\
 Twin peaks (2D) &  0.98 & 0.98 & 0.99 & 0.99 & \textbf{1.00} & 0.99 & 0.99 & 0.99 & 0.99 & 0.99 & 0.99\\
 Broken Swiss (2D)  & 0.96 & 0.97 & 0.96 & 0.97 & 0.78 & 0.94 & 0.97 & 0.92 & 0.89 & \textbf{1.00} & \textbf{1.00} \\
 \hline
 MNIST (20D) &  \textbf{1.00} & 0.96 & 0.99 & 0.92 & 0.95 & 0.96 & 0.89 & 0.54 & 0.54 & 0.99 &  0.99   \\
 COIL20 (5D) &  0.99 & 0.89 & 0.98 & 0.92 & 0.91 & 0.93 & 0.27 & 0.69 & 0.96 & 0.99 &  \textbf{1.00}    \\
 ORL (8D) &  \textbf{0.99} & 0.78 & 0.98 & 0.95 & 0.49 & 0.95 & 0.29 & 0.76 & 0.94 & 0.98 & \textbf{0.99}  \\
 HIVA (15D) &  0.97 & 0.87 & 0.89 & 0.89 & 0.75 & 0.80 & 0.78 & 0.42 & 0.54 & 0.97 &  \textbf{0.98} \\
   \hline
 \end{tabular}
   }
  }
 \end{table}

 \begin{table}[t]
  \center{
   \caption{Continuity C(12) on the synthetic datasets}\label{table:conti}
   {\tiny\addtolength{\tabcolsep}{-2pt}
 \begin{tabular}{|c|c|c|c|c|c||c|c|c|c||c|c|} 
   \hline			
   Dataset (d) & PCA & Isomap & KPCA & MVU & DM & LLE & LEM & HLLE & LTSA & kDRFC & sDRFC\\
   \hline		
   Swiss roll (2D)  & \textbf{1.00} & 0.99 & 0.99 & \textbf{1.00} & 0.91 & 1.00 & 0.99 & \textbf{1.00} & \textbf{1.00}  & \textbf{1.00}  &  \textbf{1.00}  \\
   Helix (1D) &   0.98 & 0.97 & 0.98 & \textbf{1.00} & 0.98 & 0.99 & 0.99 & 0.50 & \textbf{1.00}  &  \textbf{1.00} &  \textbf{1.00}  \\
   Twin peaks (2D)  & \textbf{1.00} & 0.99 & 0.99 & \textbf{1.00} & \textbf{1.00} & 0.99 & \textbf{1.00} & \textbf{1.00} & \textbf{1.00}  & \textbf{1.00} & \textbf{1.00} \\
   Broken Swiss (2D) & \textbf{1.00} & 0.98 & 0.99 & \textbf{1.00} & 0.90 & 0.98 & 0.99 & 0.99 & 0.99 & \textbf{1.00} & \textbf{1.00} \\
   \hline		
   MNIST (20D) & \textbf{1.00} & 0.94 & 0.89 & 0.93 & 0.95 & 0.96 & 0.70 & 0.50 & 0.50 & 0.99 & \textbf{1.00}  \\
   COIL20 (5D) & \textbf{1.00} & 0.90 & 0.98 & 0.97 & 0.92 & 0.95 & 0.47 & 0.71 & 0.99 & \textbf{1.00}  & \textbf{1.00}   \\
   ORL (8D) &  \textbf{0.99} & 0.76 & 0.95 & 0.97 & 0.57 & 0.95 & 0.49 & 0.76 & 0.94 & \textbf{0.99} & \textbf{0.99} \\
   HIVA (15D) &  \textbf{0.99} & 0.84 & 0.88 & 0.94 & 0.80 & 0.80 & 0.54 & 0.51 & 0.62 & \textbf{0.99} & \textbf{0.99} \\
   \hline
 \end{tabular}
   }
  }
 \end{table}

\section{Conclusion}
In this paper, we propose a new dimensionality reduction model inspired by the formations of mobile agents under interagent distance control. We regard reduction process as a dynamical system interacting between many bodies, aiming to achieve local and global structures from the control of neighbor points and that of remote points. Numerical tests and comparisons validate our approach. Our next challenge includes more rigorous mathematical analysis and robust computational schemes applicable to massive data sets.

\section*{Acknowledgement}
This work was supported by by the National Research Foundation of Korea (NRF) (No. 2022R1A2C1010537).

\bibliographystyle{plain}

\end{document}